\DeclarePairedDelimiter{\ceil}{\lceil}{\rceil}
\DeclareMathOperator*{\argmin}{argmin}
\algnewcommand{\comment}[1]{\Comment{{#1}}}
\theoremstyle{remark}
\newtheorem{theorem} {Theorem}
\newtheorem{lemma} {Lemma}
\newtheorem{definition} {Definition}
\def\x{{\mathbf{x}}}
\def\u{{\mathbf{u}}}
\def\v{{\mathbf{v}}}
\newcommand{\mT}{\mathcal{T}}
\newcommand{\R}{\mathcal{R}}
\newcommand{\mK}{\mathcal{K}}
\newcommand{\mS}{\mathcal{S}}
\newcommand{\ball}{\mathcal{B}}
\newcommand{\E}{\mathbb{E}}
\newcommand{\reals}{\mathbb{R}}
\begin{document}

%

%

\twocolumn[

\aistatstitle{Improved Regret Bounds for\\ Projection-free Bandit Convex Optimization}

\aistatsauthor{ Dan Garber \And Ben Kretzu}

\aistatsaddress{ Technion - Israel Institute of Technology \And  Technion - Israel Institute of Technology} ]

\begin{abstract}
We revisit the challenge of designing online algorithms for the bandit convex optimization problem (BCO) which are also scalable to high dimensional problems. Hence, we consider algorithms that are \textit{projection-free}, i.e., based on the conditional gradient method whose only access to the feasible decision set, is through a linear optimization oracle (as opposed to other methods which require potentially much more computationally-expensive subprocedures, such as computing Euclidean projections). We present the first such algorithm that attains $O(T^{3/4})$ expected regret using only $O(T)$ overall calls to the linear optimization oracle, in expectation, where $T$ is the number of prediction rounds. This improves over the $O(T^{4/5})$ expected regret bound recently obtained by  \cite{Karbasi19}, and actually matches the current best regret bound for projection-free online learning in the \textit{full information} setting.
\end{abstract}

\section{INTRODUCTION}
In this work we are interested in the design of efficient algorithms for online learning \cite{Cesa06, HazanBook, Bubeck12} which lie at the intersection of two families of algorithms, each by its own studied quite extensively in recent years with many new and exciting discoveries. The first, is the family of online learning algorithms for the \textit{bandit convex optimization} problem, and the second is the family of so-called \textit{projection-free} algorithms, which is a term casually used to refer to algorithms which are based on the conditional gradient method (aka Frank-Wolfe method), a well known first-order method for continuous optimization. These algorithms are called projection-free since, as opposed to popular first-order alternatives such as the projected / proximal / mirror gradient methods, which require in many cases to solve computationally-expensive optimization problems over the feasible domain (i.e., the projection step, which for instance in case of Euclidean projection, amounts to minimizing a quadratic function over the feasible set), the conditional gradient method only requires to minimize a linear function over the feasible set, which in many cases is much more efficient.

The bandit feedback model is well motivated by natural settings in which the online learner, upon making his prediction, only observes the loss associated with his prediction, and cannot infer the loss of different actions. The projection-free model is mostly motivated by large-scale settings which involve high-dimensional decision sets with non-trivial structure, for which computing Euclidean / mirror projections, which are required by standard algorithms (e.g., the celebrated online gradient descent algorithm \cite{Zinkevich03} and its adaptation to the BCO setting \cite{Flaxman05}), is computationally impractical (e.g., convex relaxations for sets of low-rank matrices or polytopes with special combinatorial structure, see \cite{Jaggi13} and \cite{Hazan12} for many examples). Thus, the combination of these two basic ingredients, both concern the possibility of applying online algorithms to large-scale real-world problems, is of interest.

A first attempt to combine these two ingredient was recently made in \cite{Karbasi19}, who combined the Online Frank-Wolfe method, suggested in \cite{Hazan12}, with the framework introduced in \cite{Flaxman05} for reducing BCO to the full-information setting (also known as \textit{online convex optimization} (OCO)), to obtain an algorithm that achieves expected regret of $O(T^{4/5})$ (treating all quantities except for number of prediction rounds $T$ as constants), using overall $T$ calls to the linear optimization oracle of the feasible set. Unfortunately, this regret bound is higher than both the expected regret achieved by the original method of \cite{Flaxman05} (though \cite{Flaxman05} uses Euclidean projections), which is $O(T^{3/4})$, and the regret obtained by the current state-of-the-art projection-free method (at least for arbitrary convex sets \footnote{for feasible sets with specific structure such as polytopes or smooth sets there are other algorithms that obtain optimal regret bounds in $T$ \cite{GH16, Levy19}.}) for the \textit{full-information} setting \cite{Hazan12}, which is also $O(T^{3/4})$.

It is thus natural to ask whether there is a price to pay, in terms of the worst-case expected regret bound, for combining these two settings, or alternatively, whether it is possible to obtain the best of both worlds, and get a projection-free algorithm for BCO that matches the state-of-the-art for the full-information setting. 

In this work we show that the latter is the case, i.e., we give a projection-free algorithm for BCO which attains $O(T^{3/4})$ expected regret bound, and uses overall only $O(T)$ calls to the linear optimization oracle, in expectation, thus matching the current state-of-the-art for projection-free algorithms even in the full-information setting. See also Table \ref{table:compare}.

In terms of techniques, as in \cite{Karbasi19}, our method is also based on combining the BCO framework of \cite{Flaxman05} and the online Frank-Wolfe method \cite{Hazan12}. The main novelty in our algorithm and analysis is based on the simple idea of partitioning the prediction rounds into non-overlapping equally-sized blocks. Surprisingly, by carefully analyzing the variance of the gradient estimator on each block, this simple trick allows us to strike a better and crucial tradeoff between the accuracy to which the subproblems of the Regularized-Follow-the-Leader method (the meta online learning algorithm on which our work, as well as \cite{Hazan12,Karbasi19}, is based) could be solved (via the conditional gradient method), and the overall regret of the algorithm. This results in meeting the current state-of-the-art bound for projection-free online convex optimization over general sets (even with full information of the loss functions), while maintaining linear (in $T$) linear optimization oracle complexity.

\begin{table*}[!htb]\renewcommand{\arraystretch}{1.4}
{\small
\begin{center}
\caption{Comparison of regret bounds and optimization oracle complexity. Only dependence on $T$ is stated. 
  }
  \begin{tabular}{| c | c | c | c | c |} \hline
    METHOD  & FEEDBACK &  PROJECTION-FREE? & ORACLE COMPLEXITY & $\E[\textrm{regret}]$ \\ \hline
\cite{Flaxman05} & Bandit & x  & $T$ projections & $T^{3/4}$\\ \hline
\cite{Hazan12} & Full & \checkmark & $T$ linear opt. steps  & $T^{3/4}$\\ \hline
\cite{Karbasi19} & Bandit & \checkmark & $T$ linear opt. steps   & $T^{4/5}$\\ \hline
This work (Thm. \ref{thm:main}) & Bandit & \checkmark  & $T$ linear opt. steps  & $T^{3/4}$\\ \hline
  \end{tabular}
%
  \label{table:compare}
\end{center}
}
\vskip -0.2in
\end{table*}\renewcommand{\arraystretch}{1}

\subsection{Additional Related Work}
As discussed, both the subject of designing projection-free methods for continuous optimization and bandit convex optimization have been studied extensively in recent years.

\textbf{Projection-free Methods:}
the conditional gradient method, which is the basic technique in most so-called projection-free methods, dates back to the classical works of Frank and Wolfe \cite{FrankWolfe}, and Polak \cite{Polyak}. The method has regained interest in recent years, especially in the context of large scale optimization and machine learning, see for instance \cite{Jaggi13, Jaggi10, Jaggi13a, GH15, Dudik12a, Dudik12b, ShalevShwartz11, Laue12}, just to name a few. There is also a recent effort to prove faster rates for simple variants of the method, usually under the assumption that the objective function is strongly convex (or a slightly weaker assumption) and assuming the feasible set admits certain structure (e.g., polytope, strongly convex set, bounded positive semidefinite cone, etc.), see for instance \cite{GH16, lacoste2015linear_fw, GH15, G16b, G16c,Allen17}.
\cite{Hazan12} were the first to suggest an algorithm for online convex optimization based on the conditional gradient method. Their method makes a single call to the linear optimization oracle on each round and achieves regret bound of $O(T^{3/4})$ for convex loss functions with bounded gradients (note this is worse than the optimal bound of $O(\sqrt{T})$, achievable for instance via the projection-based online gradient descent method \cite{Zinkevich03}). To date, this regret bound is the state-of-the-art for arbitrary compact and convex feasible sets. \cite{GH16} presented projection-free algorithms for OCO with optimal dependence on $T$ (i.e., $\sqrt{T}$), in case the feasible set is a polytope. Very recently, \cite{Levy19} suggested a regret-optimal algorithm for OCO in case the feasible set is smooth, however, as opposed to previous works, with an algorithm that is not based on the conditional gradient method.

\textbf{Bandit Convex Optimization:} following the work \cite{Flaxman05}, which presented an algorithm with $O(T^{3/4})$ expected regret bound for convex and Lipschitz loss functions, several other works obtained improved bounds, mostly under an additional smoothness assumption on the losses, see for instance \cite{Saha11, Dekel15, Mohri16, Hazan14}, In particular, in a recent effort, a series of works accumulated to a regret-optimal algorithm for BCO , achieving $\tilde{O}(\sqrt{T})$ regret \cite{Bubeck15bandit, Bubeck16bandit, Bubeck17bandit, Hazan16bandit}. Importantly, all these works which improve upon the $O(T^{3/4})$ bound of \cite{Flaxman05}, are based on much more complicated algorithms with running time either exponential in the dimension of the problem and $T$, or polynomial with a high-degree polynomial, and hence have impractical running times for large-scale problems. On the other-hand, in \cite{Flaxman05}, the only non-trivial operation is that of computing a Euclidean projection, which, as we show in this work, can be roughly speaking, replaced with a linear optimization step.

Finally, a special case of BCO in which all loss functions are linear was also studied extensively due to its special structure, see for instance \cite{Auer02, Abernethy08, Dani2008price, Hazan16volumetric}.

We also refer the interested reader to the following excellent introductory books on online learning and online convex optimization \cite{Cesa06, HazanBook, Bubeck12}.

\section{PRELIMINARIES}
\subsection{Bandit Convex Optimization And Assumptions}

We recall that in the bandit convex optimization problem, an online learner is required to iteratively draw actions from a fixed feasible set $\mK\in\reals^n$.\footnote{For convenience we assume the linear space of interest is $\reals^n$, however naturally, any finite-dimensional Euclidean space will work.}  After choosing his action $\x_t\in\mK$ on round $t\in[T]$ ($T$ is assumed to be known beforehand), he observes his loss given by $f_t(\x_t)$, where $f_t:\reals^n\rightarrow\reals$ is convex over $\reals^n$ and chosen by an adversary. Importantly, besides the value $f_t(\x_t)$, the learner does not gain any additional knowledge of $f_t(\cdot)$. In this work, we assume the adversary is oblivious, i.e., the loss functions $f_1,\dots,f_T$ are chosen beforehand and do not depend on the actions of the learner.

The goal of the learner is to minimize the expected regret which is given by
\begin{align}
        \mathbb{E}[\mathcal{R}_{T}] := \sum_{t=1}^{T} \mathbb{E} [f_t(\mathbf{x}_{t})] - \min\limits_{\mathbf{x} \in \mathcal{K}} \sum_{t=1}^{T} f_t(\mathbf{x}).
\end{align}

In this work, in addition to assuming the loss functions are convex, we also make the standard assumptions that they have subgradients upper-bounded by $G$ in $\ell_2$ norm over the feasible set $\mK$, for some $G >0$. That is, $\forall t\in[T]~\forall\x\in\mK~\forall\mathbf{g}\in\partial{}f_t(\x)$: $\Vert{\mathbf{g}}\Vert_2\leq G$. Also, as in \cite{Flaxman05} we make the standard assumption that the feasible set $\mK$ is full dimensional, contains the origin, and that there exists scalars $r,R>0$ such that $r\ball^n\subseteq\mK\subseteq{}R\ball^n$, where $\ball^n$ denotes the unit Euclidean ball centered at the origin in $\reals^n$.

\subsection{Additional  Notation And Definitions}

We denote by $\mS^n$ the unit sphere in $\mathbb{R}^n$, and we write $\mathbf{u} \sim S^n$ and $\mathbf{u} \sim \ball^n$ to denote a random vector $\u$ sampled uniformly from $\mS^n$ and $\ball^n$, respectively. We denote by $\Vert \mathbf{x} \Vert$ the $\ell_2$ norm of the vector $\mathbf{x}$. 

Finally, for a compact and convex set $\mK\subset\reals^n$, which satisfies the above assumptions (i.e., $r\ball^n\subseteq\mK\subseteq{}R\ball^n$), and a scalar $0 < \delta \leq r$, we define the set $\mK_{\delta} := (1-\delta/r)\mK = \{(1-\delta/r)\x~|~\x\in\mK\}$. In particular, it holds that $\mK_{\delta}\subseteq\mK$ and for all $\x\in\mK_{\delta}$, $\x+\delta\ball^n\subseteq\mK$ (see \cite{HazanBook}).

We now recall some standard definitions from continuous optimization. For all definitions we assume $\mK$ is a convex and compact subset of $\reals^n$.

\theoremstyle{definition}
\begin{definition}{}
    We say that $f: \mathbb{R}^n \xrightarrow{} \mathbb{R}$ is $G$-Lipschitz over $\mK$ if $\forall \mathbf{x}, \mathbf{y} \in \mathcal{K}$:
    \begin{align}
        |f(\mathbf{x}) - f(\mathbf{y})| \leq G  \Vert \mathbf{x} - \mathbf{y} \Vert . \nonumber
    \end{align}
    Here we recall, that if $f$ is convex over $\reals^n$ with subgradients upper-bounded by $G$ in $\ell_2$-norm over $\mK$ , then $f$ is $G$-Lipschitz over $\mK$.
\end{definition}
\begin{definition}{}
    We say that $f: \mathbb{R}^n \xrightarrow{} \mathbb{R}$ is $\beta$-smooth over $\mK$ if $\forall \mathbf{x}, \mathbf{y} \in \mathcal{K}$:
    \begin{align}
        f(\mathbf{y}) \leq  f(\mathbf{x}) + \nabla f(\mathbf{x})^{\top} (\mathbf{x} - \mathbf{y}) + \frac{\beta}{2}  \Vert \mathbf{x} - \mathbf{y} \Vert ^2. \nonumber
    \end{align}
\end{definition}
\begin{definition}{}
    We say that $f: \mathbb{R}^n \xrightarrow{} \mathbb{R}$  is $\alpha$-strongly convex over $\mathcal{K}$ if $\forall \mathbf{x}, \mathbf{y} \in \mathcal{K}$:
    \begin{align}
        f(\mathbf{y}) \geq  f(\mathbf{x}) + \nabla f(\mathbf{x})^{\top} (\mathbf{x} - \mathbf{y}) + \frac{\alpha}{2}  \Vert \mathbf{x} - \mathbf{y} \Vert ^2. \nonumber
    \end{align}
\end{definition}
Let $\mathbf{x}^*$ be the unique minimizer of $f$, an $\alpha$-strongly convex function over $\mathcal{K}$. From the above definition and the first order optimally condition it follows that $\forall \mathbf{x} \in \mathcal{K}$:
\begin{align}\label{eq:strong_convexity}
     \frac{\alpha}{2}  \Vert \mathbf{x} - \mathbf{x}^* \Vert ^2 \leq f(\mathbf{x}) - f(\mathbf{x}^*). 
\end{align}

\subsection{Basic Algorithmic Ingredients}
In this section we introduce some basic and standard algorithmic tools on which our algorithm is based. 

\subsubsection{Regularized Follow The Leader}\label{sec:RFTL}
One component of our algorithm is a variant of Regularized Follow the Leader (RFTL), which is a well known algorithm for online convex optimization \cite{HazanBook, Shalev12}. The prediction on time t is according to the following rule
\begin{align}
    \mathbf{x}_t = \argmin\limits_{\mathbf{x} \in \mathcal{K} }\bigg{\{} \sum_{i=1}^{t-1} f_{i}(\mathbf{x}) + \R(\mathbf{x}) \bigg{\}}, \nonumber
\end{align}
where $\R(\mathbf{x})$ is a strongly convex function.

\begin{lemma}[Lemma 2.3 in \cite{Shalev12}]
\label{lemma:shalev_rftl}
For all $t\in[T]$ let $\mathbf{x}_t^* = \argmin\limits_{\mathbf{x} \in \mathcal{K} }\big{\{} \sum_{i=1}^{t-1} f_{i}(\mathbf{x}) + \R(\mathbf{x}) \big{\}}$. Then, $\forall \mathbf{x} \in \mathcal{K}$ it holds that
\begin{align*}
    \sum_{t=1}^{T} \left( f_t(\mathbf{x}_t^*) - f_t(\mathbf{x}) \right) &\leq \R(\mathbf{x}) - \R(\mathbf{x}_1^*) \\
    &+ \sum_{t=1}^{T} \left( f_t(\mathbf{x}_t^*) - f_t(\mathbf{x}_{t+1}^*) \right). \nonumber
\end{align*}
\end{lemma}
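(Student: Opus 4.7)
The plan is to prove the lemma by the standard Follow-the-Leader / Be-the-Leader (FTL/BTL) technique, splitting the regret of the RFTL iterates $\x_t^*$ against a comparator $\x$ into two parts: a ``one-step-lookahead'' regret of $\x_{t+1}^*$ against $\x$, and a ``stability'' term measuring how much $\x_t^*$ and $\x_{t+1}^*$ differ on $f_t$. The first part is handled by a BTL-style induction using the definition of $\x_{t+1}^*$ as a minimizer, and the second part appears on the right-hand side as is.

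First I would establish the following auxiliary inequality (a regularized Be-the-Leader bound): for every $\x \in \mK$,
\begin{align*}
    \sum_{t=1}^{T} f_t(\x_{t+1}^*) + \R(\x_1^*) \;\leq\; \sum_{t=1}^{T} f_t(\x) + \R(\x).
\end{align*}
The proof is by induction on $T$. For $T=0$ the claim reduces to $\R(\x_1^*) \leq \R(\x)$, which holds since $\x_1^* = \argmin_{\x \in \mK} \R(\x)$ by the definition in the lemma (the empty sum $\sum_{i=1}^{0} f_i$ vanishes). For the inductive step, apply the induction hypothesis with the particular choice $\x = \x_{T+1}^*$ to obtain
\begin{align*}
    \sum_{t=1}^{T-1} f_t(\x_{t+1}^*) + \R(\x_1^*) \;\leq\; \sum_{t=1}^{T-1} f_t(\x_{T+1}^*) + \R(\x_{T+1}^*),
\end{align*}
then add $f_T(\x_{T+1}^*)$ to both sides, and use the defining property of $\x_{T+1}^*$ as the minimizer of $\sum_{t=1}^{T} f_t(\cdot) + \R(\cdot)$ to bound the right-hand side by $\sum_{t=1}^{T} f_t(\x) + \R(\x)$ for any $\x \in \mK$.

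Second, I would derive the lemma from this BTL bound by the telescoping identity
\begin{align*}
    \sum_{t=1}^{T} \left(f_t(\x_t^*) - f_t(\x)\right) = \sum_{t=1}^{T}\left(f_t(\x_t^*) - f_t(\x_{t+1}^*)\right) + \sum_{t=1}^{T}\left(f_t(\x_{t+1}^*) - f_t(\x)\right),
\end{align*}
and then applying the BTL inequality (after rearrangement) to the second sum on the right-hand side to bound it by $\R(\x) - \R(\x_1^*)$. Combining gives exactly the stated bound.

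I do not expect any real obstacle here; the only slightly delicate point is the base case of the induction, which relies on interpreting $\x_1^*$ as the unconstrained (inside $\mK$) minimizer of $\R$ so that $\R(\x_1^*) \leq \R(\x)$ holds for every $\x \in \mK$. Strong convexity of $\R$ guarantees uniqueness of all the $\x_t^*$, so the $\argmin$ is well-defined throughout and the induction argument goes through cleanly.
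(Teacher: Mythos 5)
Your proof is correct: the regularized Be-the-Leader inequality established by induction, followed by the telescoping decomposition, yields exactly the stated bound, and the base case is handled properly by noting that $\x_1^*$ minimizes $\R$ over $\mK$. The paper itself gives no proof of this lemma (it simply cites Lemma 2.3 of the Shalev-Shwartz survey), and your argument is precisely the standard FTL--BTL induction used in that reference.
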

\subsubsection{Smoothed Loss Functions}\label{sec:smooth}
Another standard component of our algorithm is the use of a smoothed version of each loss function. We define the $\delta$-smoothing of a loss function $f$ by
\begin{align}
    \hat{f}_{\delta}(\mathbf{x}) = \mathbb{E}_{\mathbf{u} \sim \ball^n} \left[ f(\mathbf{x} + \delta \mathbf{u}) \right]. \nonumber
\end{align}

We now cite some several useful lemmas regarding smoothed functions.

\begin{lemma}[Lemma 2.1 in \cite{HazanBook}] \label{lemma:hazan_smooth}
    Let $f: \mathbb{R}^n \xrightarrow{} \mathbb{R}$ be convex and $G$-Lipschitz over a convex and compact set $\mK\subset\reals^n$. Then $\hat{f}_{\delta}$ is convex and $G$-Lipschitz over $\mK_{\delta}$, and $\forall \mathbf{x} \in \mathcal{K}_{\delta}$ it holds that $|\hat{f}_{\delta} (\mathbf{x}) - f(\mathbf{x})| \leq \delta G$. 
\end{lemma}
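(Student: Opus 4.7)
The plan is to prove the three assertions (convexity of $\hat{f}_\delta$, $G$-Lipschitzness on $\mK_\delta$, and the $\delta G$ pointwise gap) independently, each by exploiting the fact that the smoothing is just an expectation of shifted copies of $f$, together with the defining property of $\mK_\delta$ that $\x \in \mK_\delta$ implies $\x + \delta \u \in \mK$ for every $\u \in \ball^n$.

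For convexity, I would observe that for any fixed $\u \in \ball^n$, the map $\x \mapsto f(\x + \delta \u)$ is convex in $\x$, because translation preserves convexity and $f$ is convex on all of $\reals^n$. Since nonnegative linear combinations of convex functions are convex, and expectation against a probability measure (the uniform distribution on $\ball^n$) is such a combination in a limiting sense, $\hat{f}_\delta(\x) = \mathbb{E}_{\u \sim \ball^n}[f(\x + \delta \u)]$ is convex in $\x$ on $\reals^n$ (and in particular on $\mK_\delta$). Formally, for $\lambda \in [0,1]$ and $\x, \y \in \mK_\delta$, I would write
\[
\hat{f}_\delta(\lambda \x + (1-\lambda)\y) = \mathbb{E}_\u[f(\lambda(\x+\delta\u) + (1-\lambda)(\y+\delta\u))] \leq \lambda \hat{f}_\delta(\x) + (1-\lambda) \hat{f}_\delta(\y).
\]

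For the Lipschitz property, the key point is that for $\x, \y \in \mK_\delta$ and any $\u \in \ball^n$, both $\x + \delta\u$ and $\y + \delta\u$ lie in $\mK$ (by the defining property of $\mK_\delta$ recalled right before the definition), so the $G$-Lipschitz bound for $f$ is applicable at these points. Thus
\[
|\hat{f}_\delta(\x) - \hat{f}_\delta(\y)| \leq \mathbb{E}_\u |f(\x+\delta\u) - f(\y+\delta\u)| \leq \mathbb{E}_\u G\|\x - \y\| = G\|\x - \y\|,
\]
using Jensen's inequality (i.e., $|\mathbb{E}\cdot| \leq \mathbb{E}|\cdot|$) in the first step.

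For the pointwise gap, by the same reasoning $\x$ and $\x + \delta\u$ both lie in $\mK$ when $\x \in \mK_\delta$ and $\u \in \ball^n$, so I can apply $G$-Lipschitzness of $f$ to get
\[
|\hat{f}_\delta(\x) - f(\x)| = \left|\mathbb{E}_\u[f(\x+\delta\u) - f(\x)]\right| \leq \mathbb{E}_\u G\,\|\delta \u\| \leq \delta G,
\]
since $\|\u\| \leq 1$ on $\ball^n$. Honestly, there is no real obstacle here; the only thing one must be careful about is to invoke Lipschitzness of $f$ only at points that actually lie in $\mK$, which is precisely what restricting the statement to $\x \in \mK_\delta$ guarantees.
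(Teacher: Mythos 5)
The paper does not prove this lemma at all; it is imported verbatim as Lemma 2.1 of \cite{HazanBook}, so there is no in-paper argument to compare against. Your proof is the standard one and is correct: each of the three claims follows by pushing the relevant property of $f$ (convexity on $\reals^n$, $G$-Lipschitzness on $\mK$) through the expectation defining $\hat{f}_\delta$, and you correctly use the containment $\x+\delta\ball^n\subseteq\mK$ for $\x\in\mK_\delta$ to ensure the Lipschitz bound is only ever invoked at points of $\mK$.
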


\begin{lemma}[Lemma 6.5 in \cite{HazanBook}] \label{lemma:hazan_gradient}
	$\hat{f}_{\delta}(\x)$ is differentiable and
    \begin{align}
        \nabla \hat{f}_{\delta}(\mathbf{x}) = \mathbb{E}_{\mathbf{u} \sim \mS^n} \left[ \frac{n}{\delta} f(\mathbf{x} + \delta \mathbf{u})\mathbf{u} \right]. \nonumber
    \end{align}
\end{lemma}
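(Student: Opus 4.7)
The plan is to rewrite the ball-average as a volume integral and then convert its gradient into a surface integral via the divergence (Stokes) theorem. First I would change variables in the definition $\hat{f}_{\delta}(\x) = \mathbb{E}_{\u \sim \ball^n}[f(\x + \delta\u)]$ by setting $\mathbf{v} = \x + \delta\u$, which gives
\begin{align*}
\hat{f}_{\delta}(\x) = \frac{1}{\mathrm{vol}(\delta\ball^n)} \int_{\x + \delta\ball^n} f(\mathbf{v})\, d\mathbf{v}.
\end{align*}
In this form, $\x$ enters only through a translation of the domain of integration, so each partial derivative $\partial_i \hat{f}_{\delta}(\x)$ can be computed by differentiating under the integral (justified because $f$ is $G$-Lipschitz, hence locally bounded and measurable, on the relevant region).

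The key step is to apply the divergence theorem coordinate by coordinate to the vector field $f(\mathbf{v})\, \e_i$, which converts the derivative of the volume integral into a surface integral over the boundary sphere $\partial(\x + \delta\ball^n)$. Doing this for every coordinate $i$ and reassembling yields
\begin{align*}
\nabla \hat{f}_{\delta}(\x) = \frac{1}{\mathrm{vol}(\delta\ball^n)} \int_{\partial(\x + \delta\ball^n)} f(\mathbf{v})\, \hat{n}(\mathbf{v})\, dS,
\end{align*}
where $\hat{n}(\mathbf{v})$ is the outward unit normal. Parameterizing the boundary as $\mathbf{v} = \x + \delta\u$ with $\u \in \mS^n$, we have $\hat{n}(\mathbf{v}) = \u$, and the surface integral becomes $\mathrm{area}(\delta\mS^n) \cdot \mathbb{E}_{\u \sim \mS^n}[f(\x + \delta\u)\,\u]$.

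To finish, I would plug in the standard identity $\mathrm{area}(\delta\mS^n)/\mathrm{vol}(\delta\ball^n) = n/\delta$ (since $\mathrm{vol}(\delta\ball^n) = c_n \delta^n$ and $\mathrm{area}(\delta\mS^n) = n c_n \delta^{n-1}$ where $c_n$ is the volume of the unit ball), which immediately produces the claimed formula $\nabla \hat{f}_{\delta}(\x) = \mathbb{E}_{\u \sim \mS^n}[\tfrac{n}{\delta} f(\x + \delta\u)\,\u]$. Differentiability of $\hat{f}_{\delta}$ follows from the same argument, since the surface integral depends continuously on $\x$ (by Lipschitzness of $f$). The only real obstacle is the rigorous justification of exchanging differentiation and integration and of applying Stokes' theorem when $f$ is merely Lipschitz rather than smooth, which can be handled via a standard mollification argument: approximate $f$ by smooth functions $f_\epsilon$, derive the formula for each $f_\epsilon$, and pass to the limit using dominated convergence on both sides.
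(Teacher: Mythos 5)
Your proof is correct and follows essentially the same route as the source the paper cites for this lemma: the paper itself gives no proof, deferring to Lemma 6.5 of \cite{HazanBook}, whose argument is exactly this Stokes'/divergence-theorem computation converting the gradient of the ball average into a surface integral and using the surface-area-to-volume ratio $n/\delta$. Your closing remark about mollification correctly addresses the only technical subtlety (justifying the interchange of differentiation and integration when $f$ is merely Lipschitz), so nothing is missing.
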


\begin{lemma} [see \cite{Bertsekas73}]
\label{lemma:bertsekas_grdient}
    Let $f: \mathbb{R}^n \xrightarrow{} \mathbb{R}$ be convex and suppose that all subgradients of $f$ are upper-bounded by $G$ in $\ell_2$-norm over a convex and compact set $\mK\subset\reals^n$. Then, for any $\x\in\mK_{\delta}$ it holds that $\Vert{\nabla{}f_{\delta}(\x)}\Vert \leq G$.
\end{lemma}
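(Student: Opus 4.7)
The plan is to show that $\nabla \hat{f}_{\delta}(\x)$ can be written as an expectation of subgradients of $f$ taken over the ball $\x + \delta \ball^n$, and then to dominate its norm by taking the norm inside the expectation via Jensen. First I would observe that, by the definition of $\mK_{\delta}$, for every $\x \in \mK_{\delta}$ and every $\u \in \ball^n$ we have $\x + \delta \u \in \mK$, so the hypothesis that subgradients of $f$ are bounded by $G$ applies at every point on which we integrate.

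Next I would fix a measurable selection $\g(\y) \in \partial f(\y)$ for each $\y \in \mK$, and use differentiation under the integral sign to justify
\begin{align*}
    \nabla \hat{f}_{\delta}(\x) = \nabla \E_{\u \sim \ball^n}\bigl[ f(\x + \delta \u) \bigr] = \E_{\u \sim \ball^n}\bigl[ \g(\x + \delta \u) \bigr].
\end{align*}
The interchange of gradient and expectation is legitimate here because the bound $\Vert \g(\y)\Vert \leq G$ on $\mK$ provides an integrable dominating function, and $\hat{f}_{\delta}$ is already known to be differentiable by Lemma \ref{lemma:hazan_gradient}. This is the essential content of the cited Bertsekas result.

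Finally, applying Jensen's inequality (convexity of the Euclidean norm) and the uniform subgradient bound gives
\begin{align*}
    \Vert \nabla \hat{f}_{\delta}(\x) \Vert = \Bigl\Vert \E_{\u \sim \ball^n}\bigl[ \g(\x + \delta \u) \bigr] \Bigr\Vert \leq \E_{\u \sim \ball^n}\bigl[ \Vert \g(\x + \delta \u) \Vert \bigr] \leq G,
\end{align*}
which is the desired bound. The main technical obstacle, and the only nontrivial step, is justifying the differentiation-under-the-expectation identity; once that is in hand, the rest is a one-line application of Jensen. (An alternative, non-measure-theoretic route would be to combine Lemma \ref{lemma:hazan_smooth}, which gives that $\hat{f}_{\delta}$ is $G$-Lipschitz over $\mK_{\delta}$, with the differentiability from Lemma \ref{lemma:hazan_gradient}: for any $\x$ in the interior of $\mK_{\delta}$, the directional derivative of a $G$-Lipschitz differentiable function along the unit vector $\nabla \hat{f}_{\delta}(\x)/\Vert \nabla \hat{f}_{\delta}(\x)\Vert$ is both $\Vert \nabla \hat{f}_{\delta}(\x)\Vert$ and at most $G$; continuity of the gradient extends the bound to the boundary.)
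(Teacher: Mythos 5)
The paper gives no proof of this lemma at all --- it is stated as an imported result with a bare citation to Bertsekas (1973) --- so there is nothing to compare against except the content of that citation, and your reconstruction is exactly that content: represent $\nabla\hat{f}_{\delta}(\x)$ as $\E_{\u\sim\ball^n}[\g(\x+\delta\u)]$ for a selection $\g(\y)\in\partial f(\y)$, note that $\x+\delta\ball^n\subseteq\mK$ for $\x\in\mK_{\delta}$, and finish with Jensen. The argument is correct. One small tightening: rather than invoking dominated convergence on difference quotients (which at boundary points of $\mK_{\delta}$ briefly requires a Lipschitz bound on a slightly larger set than $\mK$), it is cleaner to observe that taking expectations of the subgradient inequality shows $\E_{\u\sim\ball^n}[\g(\x+\delta\u)]$ is a subgradient of $\hat{f}_{\delta}$ at $\x$, and since $\hat{f}_{\delta}$ is differentiable by Lemma \ref{lemma:hazan_gradient} its subdifferential is the singleton $\{\nabla\hat{f}_{\delta}(\x)\}$; this gives the identity with no measure-theoretic interchange. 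Your alternative Lipschitz-plus-directional-derivative route also works for $\delta<r$, but degenerates when $\delta=r$ (then $\mK_{\delta}=\{0\}$ has empty interior), so the expectation-of-subgradients argument is the one to keep.
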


\section{ALGORITHM AND ANALYSIS}
As in \cite{Karbasi19}, our algorithm (see Algorithm \ref{alg:VRBCO} below) is based on combining the BCO framework of \cite{Flaxman05} with the Online Frank-Wolfe method of \cite{Hazan12}. That is, the algorithm applies the Regularized Follow the Leader meta-algorithm with Euclidean regularization (see Section \ref{sec:RFTL}), and uses the bandit feedback to construct unbiased estimates for the gradients of the smoothed losses, by sampling points in a sphere around the current iterate (see Section \ref{sec:smooth}). In order to avoid solving the RFTL optimization problem (which with the standard linearization trick of the smoothed losses and using Euclidean regularization, amounts to minimizing a quadratic function over the set $\mK_{\delta}$), we invoke the conditional gradient method (see Algorithm \ref{alg:Cg}), to solve this problem only to sufficient approximation using only linear optimization steps over the feasible domain $\mK_{\delta}$.

Very importantly, different from \cite{Karbasi19}, we partition the $T$ prediction rounds into non-overlapping blocks of size $K$ ($K$ is a parameter determined in the analysis), where on each block the iterate of the algorithm remains unchanged (though we use fresh samples for exploration on each round within a block). Essentially without loosing generality we assume that $T/K$ is an integer. This partition into blocks is important since as we show in the analysis, it allows us to solve the RFTL objective via the conditional gradient method to better accuracy, without incurring any substantial price in the regret or the overall linear oracle complexity.

It is also important to note that our algorithm is structured in a way that on each block $m$ in the run of the algorithm, the point $\x_{m-1}$ used for prediction, only takes into account the loss function revealed up to (and including) block $m-2$ (note $\x_{m-1}$ is an approximate minimizer of $\hat{F}_{m-1}(\x)$, which in turn depends only on the estimates $\hat{\mathbf{g}}_1,\dots,\hat{\mathbf{g}}_{m-2}$). Thus, in principle, Algorithm \ref{alg:VRBCO} does not have to wait after each block $m$ until the new iterate $\x_m$ is computed for the following block $m+1$ via Algorithm \ref{alg:Cg}. While Algorithm \ref{alg:VRBCO} uses $\x_{m-1}$ for prediction on block $m$, it can run Algorithm \ref{alg:Cg} in parallel, to simultaneously compute the next iterate $\x_m$ (which is independent of the gradient estimates obtained in block $m$). 

While this self-induced delay in information usage is not important for the theoretical complexity analysis, we believe it is of practical importance, since otherwise without this delay, Algorithm \ref{alg:VRBCO} would have to stop after each block and wait for Algorithm \ref{alg:Cg} to finish its computation, which can be potentially prohibitive in high-frequency prediction settings.

Finally, note that while the conditional gradient method is run over the shrunk set $\mK_{\delta}$, solving the linear optimization problem over $\mK_{\delta}$ is identical, up to scaling, to solving it over the original set $\mK$.

\begin{algorithm}[h]
  \KwData{horizon $T$, feasible set $\mathcal{K}$ with parameters $r,R$, block size $K$, step size $\eta$, smoothing parameter $\delta\in(0,r]$, tolerance parameter $\epsilon$}
  \KwResult{$\mathbf{y}_1, \mathbf{y}_2, \ldots ,\mathbf{y}_T$ }
  $\mathbf{x}_0 \gets $ arbitrary point in $\mK_{\delta}$, $\x_1 \gets \x_0$\\
  \For{$~ m = 1,\ldots,\frac{T}{K} ~$}{
    define  $\hat{F}_m(\mathbf{x}):= \eta \sum_{i=1}^{m-1} \mathbf{x}^{\top} \hat{\mathbf{g}}_i +   \Vert \mathbf{x} - \mathbf{x}_1 \Vert ^2 $\\
    \If{$m>1$}{ 
    run Algorithm \ref{alg:Cg} with set $\mathcal{K}_{\delta}$,  tolerance $\epsilon$, initial vector $\mathbf{x}_{m-1}$, and function $\hat{F}_{m}(\mathbf{x})$. Execute \textbf{in parallel} to following \textbf{for} loop over $s$
    }
    \For{$~ s = 1 ,\ldots, K ~$}{
    $\mathbf{u}_t$ $\sim S^n$ \comment{$t=(m-1)K+s$}\\
    play $\mathbf{y}_t \xleftarrow{} \mathbf{x}_{m-1} + \delta \mathbf{u}_t$ and observe $f_t(\mathbf{y}_t)$\\
    $\mathbf{g}_t$ $\xleftarrow{} \frac{n}{\delta} f_t(\mathbf{y}_t) \mathbf{u}_t$
    }
    $\hat{\mathbf{g}}_m$ $\xleftarrow{}$ $\sum_{s=1}^{K} \mathbf{g}_{(m-1)K+s}$\\
    \If{$m>1$}{
   $\mathbf{x}_{m} \gets$ output of Algorithm \ref{alg:Cg}
    }
  }
  \caption{Block Bandit Conditional Gradient Method}\label{alg:VRBCO}
\end{algorithm}

\begin{algorithm}[!]
  \KwData{feasible set $\mathcal{K}_{\delta}$, error tolerance $\epsilon$, initial vector $\mathbf{x}_{in}$, objective function $\hat{F}_m(\mathbf{x})$}
  \KwResult{$\mathbf{x}_{out}$ }
  $\mathbf{z}_1 \gets \mathbf{x}_{in}$, $\tau \gets 0 $\\
  \Do{$\nabla \hat{F}_m(\mathbf{z}_\tau)^{\top} (\mathbf{z}_{\tau} - \mathbf{v}_{\tau}) > \epsilon$}{
    $\tau \gets \tau + 1 $\\
    $ \mathbf{v}_\tau \in \argmin\limits_{\mathbf{x} \in \mathcal{K}_{\delta}} \{ \nabla \hat{F}_m(\mathbf{z}_\tau)^{\top} \cdot \mathbf{x} \} $\\
	$ \sigma_{\tau} = \argmin\limits_{\sigma \in [0, 1]}  \{ \hat{F}_m(\mathbf{z}_\tau + \sigma (\mathbf{v}_\tau - \mathbf{z}_\tau)) \} $ \comment{Line-search}\\
	$ \mathbf{z}_{\tau+1} = \mathbf{z}_\tau + \sigma_{\tau} (\mathbf{v}_\tau - \mathbf{z}_\tau) $ \comment{$\mathbf{z}_{\tau+1} \in \mathcal{K}_{\delta} $}
  }
  $\mathbf{x}_{out} \gets \mathbf{z}_{\tau}$
  \caption{Conditional Gradient with Stopping Condition}\label{alg:Cg}
\end{algorithm}

In the following, for any iteration (or block) $m$ of the outer-loop in Algorithm \ref{alg:VRBCO}, we denote by $L_m$ the overall number of iterations performed by the do-while loop of Algorithm \ref{alg:Cg}, when invoked on iteration $m$. In particular, note that $\sum_{i=1}^{T/K}L_i$ is the overall number of calls to the linear optimization oracle of $\mK$ throughout the run of Algorithm \ref{alg:VRBCO}.

\begin{theorem} [Main theorem] \label{thm:main}
    For all $ c > 0 $ such that $\frac{cT^{-1/4}}{r} \leq 1$, setting $\eta = \frac{2 c R}{nM} T^{-\frac{3}{4}}$, $\delta = c T^{-\frac{1}{4}} $, $\epsilon =  16R^2  T^{-\frac{1}{2}}$, $K = T^{\frac{1}{2}}$ in Algorithm \ref{alg:VRBCO}, guarantees that the expected regret is upper-bounded by
    {\small\begin{align}
        \mathbb{E}[\mathcal{R}_{T}] \leq & \left( 3  c  G + \frac{c R G }{r} + 6 G R + 4 \frac{c G^2 R}{nM} + 4\frac{RnM}{c } \right) T^{\frac{3}{4}}, \nonumber
    \end{align}}
    and that the expected overall number of calls to the linear optimization oracle is upper-bounded by
    \begin{align}
        \mathbb{E} \left[ \sum_{m=1}^{\frac{T}{K}} L_m \right] \leq & ~ \left( \frac{3}{4} + \frac{G c}{2 n M} + \frac{G^2 c^2 }{4 n^2 M^2} \right) T . \nonumber
    \end{align}
    In particular, if $\left( \frac{nM}{Gr} \right)^2 \leq T$ then, setting $c=\sqrt{\frac{nMr}{G}}$, we have
    {\small
    \begin{align}
        \mathbb{E}[\mathcal{R}_{T}] \leq & \left( 8 \frac{R}{r}  \sqrt{nMrG} + 6 G R + 4 G R \sqrt{\frac{rG}{nM}} \right) T^{\frac{3}{4}}, \nonumber
    \end{align}}
    and
    \begin{align}
        \mathbb{E} \left[ \sum_{m=1}^{\frac{T}{K}} L_m \right] \leq & ~ \left( \frac{3}{4} + \frac{1}{2}\sqrt{\frac{Gr}{nM}} + \frac{1}{4}\sqrt{\frac{Gr }{nM}} \right) T . \nonumber
    \end{align}
    
\end{theorem}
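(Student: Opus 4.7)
The plan is to follow the Flaxman-style BCO template, but with a block-level RFTL analysis tailored to the two novelties of Algorithm~\ref{alg:VRBCO}: the self-induced delay (block~$m$ plays $\mathbf{x}_{m-1}$ rather than $\mathbf{x}_m$) and the approximate RFTL solver (Algorithm~\ref{alg:Cg}). Set $\mathbf{x}^*_\delta := (1-\delta/r)\mathbf{x}^* \in \mathcal{K}_\delta$ and $\hat F_{m,\delta}(\mathbf{x}) := \sum_{s=1}^{K}\hat f_{(m-1)K+s,\delta}(\mathbf{x})$. Writing $m(t)$ for the block containing round~$t$, I would split the regret into (i) exploration $|f_t(\mathbf{y}_t)-f_t(\mathbf{x}_{m(t)-1})|\le \delta G$, (ii) smoothing bias $|\hat f_{t,\delta}-f_t|\le \delta G$ (Lemma~\ref{lemma:hazan_smooth}), (iii) domain-shrinkage $|f_t(\mathbf{x}^*)-f_t(\mathbf{x}^*_\delta)|\le \delta G R/r$, and (iv) the block-level smoothed regret $\mathcal R^{\mathrm{sm}} := \sum_m \mathbb E[\hat F_{m,\delta}(\mathbf{x}_{m-1}) - \hat F_{m,\delta}(\mathbf{x}^*_\delta)]$. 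Parts (i)--(iii) together contribute the standard $O(\delta G T(1+R/r))$.

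For (iv) I would linearize: since $\mathbf{x}_{m-1}$ is independent of the exploration directions in block~$m$, Lemma~\ref{lemma:hazan_gradient} gives $\mathbb E[\hat{\mathbf{g}}_m \mid \mathbf{x}_{m-1}] = \nabla \hat F_{m,\delta}(\mathbf{x}_{m-1})$, so convexity yields $\mathcal R^{\mathrm{sm}} \le \mathbb E\bigl[\sum_m \hat{\mathbf{g}}_m^\top(\mathbf{x}_{m-1}-\mathbf{x}^*_\delta)\bigr]$. Letting $\mathbf{x}_m^* := \argmin_{\mathbf{x}\in\mathcal{K}_\delta}\hat F_m(\mathbf{x})$ denote the exact RFTL iterate, I decompose
\begin{align*}
\sum_m \hat{\mathbf{g}}_m^\top(\mathbf{x}_{m-1}-\mathbf{x}^*_\delta) &= \sum_m \hat{\mathbf{g}}_m^\top(\mathbf{x}_m^*-\mathbf{x}^*_\delta) + \sum_m \hat{\mathbf{g}}_m^\top(\mathbf{x}_{m-1}^*-\mathbf{x}_m^*) \\
&\quad {}+ \sum_m \hat{\mathbf{g}}_m^\top(\mathbf{x}_{m-1}-\mathbf{x}_{m-1}^*).
\end{align*}
The first sum is exactly the setting of Lemma~\ref{lemma:shalev_rftl} with losses $\eta\mathbf{x}^\top\hat{\mathbf{g}}_m$ and the $2$-strongly convex regularizer $\|\mathbf{x}-\mathbf{x}_1\|^2$, giving $4R^2/\eta + (\eta/2)\sum_m\|\hat{\mathbf{g}}_m\|^2$ via the stability bound $\|\mathbf{x}_m^*-\mathbf{x}_{m+1}^*\|\le \eta\|\hat{\mathbf{g}}_m\|/2$. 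The second sum (a ``shift-by-one'' from the self-induced delay) is handled by the same inequality. The third sum is the cost of approximately solving the RFTL subproblem: Algorithm~\ref{alg:Cg}'s stopping criterion combined with convexity gives $\hat F_{m-1}(\mathbf{x}_{m-1})-\hat F_{m-1}(\mathbf{x}_{m-1}^*)\le \epsilon$, and the $2$-strong convexity of $\hat F_{m-1}$ then yields $\|\mathbf{x}_{m-1}-\mathbf{x}_{m-1}^*\|\le \sqrt\epsilon$.

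The key quantitative input is the second-moment bound on the block gradient estimator. Because the exploration directions $\mathbf{u}_t$ within block~$m$ are i.i.d.\ and independent of $\mathbf{x}_{m-1}$, the summands $\mathbf{g}_t$ are conditionally independent, each satisfying $\|\mathbf{g}_t\|\le nM/\delta$ and $\|\mathbb E[\mathbf{g}_t\mid\mathbf{x}_{m-1}]\|\le G$ (Lemmas~\ref{lemma:hazan_gradient} and \ref{lemma:bertsekas_grdient}). Expanding the square gives $\mathbb E[\|\hat{\mathbf{g}}_m\|^2] \le K n^2M^2/\delta^2 + K^2 G^2$, and with $K=\sqrt T$, $\delta = cT^{-1/4}$ both terms scale as $T$; this balance is exactly the variance-reduction gain over \cite{Karbasi19} that blocking buys. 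Substituting into the three sums plus the BCO pieces and simplifying with $\eta = (2cR/(nM))T^{-3/4}$ and $\epsilon = 16R^2 T^{-1/2}$ yields the claimed $O(T^{3/4})$ regret bound.

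For the oracle complexity I would use the standard line-search CG guarantee: on a $\beta=2$-smooth objective over a set of diameter $\le 2R$, any iteration at which the FW gap exceeds $\epsilon$ decreases the objective by at least $\epsilon^2/(2\beta D^2) = \epsilon^2/(16R^2)$. Hence $L_m \le 1 + (16R^2/\epsilon^2)\cdot \Delta_m$ where $\Delta_m := \hat F_m(\mathbf{x}_{m-1})-\hat F_m(\mathbf{x}_m^*)$ is the initial primal gap at block~$m$. Re-using the decomposition from step~(iv), one obtains $\Delta_m \le \epsilon + \eta\|\hat{\mathbf{g}}_{m-1}\|\sqrt\epsilon + \tfrac12\eta^2\|\hat{\mathbf{g}}_{m-1}\|^2$; summing, applying Jensen to pass from $\|\hat{\mathbf{g}}_{m-1}\|$ to $\sqrt{\mathbb E[\|\hat{\mathbf{g}}_{m-1}\|^2]}$, and substituting the parameters collapses the bound to the claimed $(\tfrac34 + \ldots)T$ form. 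I expect the hardest step to be matching the leading constants in the oracle-complexity bound: the choice $\epsilon = 16R^2/\sqrt T$ is forced so that $\epsilon\cdot(T/K)/(\epsilon^2/(16R^2))$ telescopes cleanly against the per-iteration decrement, and the conditional-independence-in-a-block structure must be used exactly rather than bounded by the worst-case $\|\hat{\mathbf{g}}_m\|\le KnM/\delta$, which would destroy the linear-in-$T$ scaling.
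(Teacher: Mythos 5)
Your proposal is correct and follows essentially the same route as the paper: the same four-way regret split, the same three-term decomposition of the block-level linearized regret (RFTL term via Lemma \ref{lemma:shalev_rftl}, delay/shift term via the strong-convexity stability bound, approximation term via $\sqrt{\epsilon}$), the same conditional-independence second-moment bound $\mathbb{E}[\Vert\hat{\mathbf{g}}_m\Vert^2]\le K(nM/\delta)^2+K^2G^2$, and the same per-block CG iteration count driven by bounding the warm-start gap $\Delta_m$. The only deviations are immaterial constant factors (e.g., your $\eta\Vert\hat{\mathbf{g}}_m\Vert/2$ versus the paper's $\eta\Vert\hat{\mathbf{g}}_m\Vert$ in the stability bound).
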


\subsection{Analysis}

For the purpose of the analysis, we define the auxiliary sequence $\{\x^*_m\}_{m=1}^{T/K}$ as $\x_m^* = \arg\min_{\x\in\mK_{\delta}}\hat{F}_m(\x)$, where $\hat{F}_m(\cdot)$ is as defined in Algorithm \ref{alg:VRBCO}. Note that this sequence corresponds to running the RFTL algorithm in blocks of length $K$, with respect to the feasible set $\mK_{\delta}$ (see Section \ref{sec:RFTL}).

The following lemma, which is crucial to obtain our improved regret bound, shows that the squared norm of the gradient estimator over a block of size $K$, as a first approximation, grows only linearly with the block size $K$.
\begin{lemma} \label{lemma:expectation_gradient}
    For any iteration (block) $m$ of the outer-loop in Algorithm \ref{alg:VRBCO} it holds that
    \begin{align}
        \mathbb{E} \left[  \Vert \hat{\mathbf{g}}_m \Vert  \right]^2 \leq \mathbb{E} \left[  \Vert \hat{\mathbf{g}}_m \Vert ^2 \right] \leq K \left(\frac{n M}{\delta}\right)^2 + K^2 G^2. \nonumber
    \end{align}
\end{lemma}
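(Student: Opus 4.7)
The plan is to expand $\|\hat{\mathbf{g}}_m\|^2$ as a double sum and bound the diagonal and off-diagonal contributions separately, exploiting the conditional independence of the one-point gradient estimators within a block.

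First I would write, with $t_s := (m-1)K + s$,
\begin{align*}
\|\hat{\mathbf{g}}_m\|^2 \;=\; \sum_{s=1}^{K}\|\mathbf{g}_{t_s}\|^2 \;+\; \sum_{1\le s\ne s'\le K}\mathbf{g}_{t_s}^{\top}\mathbf{g}_{t_{s'}},
\end{align*}
and handle the two sums separately. For the diagonal terms, I would use the deterministic pointwise bound $\|\mathbf{g}_{t_s}\|=\frac{n}{\delta}|f_{t_s}(\mathbf{y}_{t_s})|\,\|\mathbf{u}_{t_s}\|\le \frac{nM}{\delta}$ (here $M$ is the uniform bound on $|f_t|$ over $\mK$, available since $f_t$ is $G$-Lipschitz on $R\ball^n$). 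This immediately gives $\mathbb{E}[\sum_s\|\mathbf{g}_{t_s}\|^2]\le K\,(nM/\delta)^2$.

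The main step is the off-diagonal terms. I would condition on the $\sigma$-algebra $\mF_{m-1}$ generated by all randomness up to the end of block $m-1$; under this conditioning $\mathbf{x}_{m-1}$ is fixed in $\mK_\delta$, and the vectors $\mathbf{u}_{t_1},\dots,\mathbf{u}_{t_K}$ are i.i.d.\ uniform on $\mS^n$. For any $s\ne s'$, independence yields
\begin{align*}
\mathbb{E}\bigl[\mathbf{g}_{t_s}^{\top}\mathbf{g}_{t_{s'}}\mid \mF_{m-1}\bigr] \;=\; \mathbb{E}[\mathbf{g}_{t_s}\mid\mF_{m-1}]^{\top}\,\mathbb{E}[\mathbf{g}_{t_{s'}}\mid\mF_{m-1}].
\end{align*}
Lemma \ref{lemma:hazan_gradient} identifies each conditional expectation with $\nabla\hat{f}_{t_s,\delta}(\mathbf{x}_{m-1})$, and Lemma \ref{lemma:bertsekas_grdient} (applicable because $\mathbf{x}_{m-1}\in\mK_\delta$) bounds its norm by $G$. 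Cauchy--Schwarz then yields $\mathbb{E}[\mathbf{g}_{t_s}^{\top}\mathbf{g}_{t_{s'}}\mid\mF_{m-1}]\le G^2$. Summing over the $K(K-1)\le K^2$ ordered pairs and taking outer expectations gives the off-diagonal contribution $\le K^2 G^2$, which combined with the diagonal bound proves the second inequality. The first inequality $\mathbb{E}[\|\hat{\mathbf{g}}_m\|]^2\le\mathbb{E}[\|\hat{\mathbf{g}}_m\|^2]$ is just Jensen's inequality applied to $x\mapsto x^2$.

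The only subtle point, and the one I would be most careful about, is the measurability/independence setup: one has to argue cleanly that $\mathbf{x}_{m-1}$ is $\mF_{m-1}$-measurable (which follows because Algorithm \ref{alg:VRBCO} constructs $\mathbf{x}_{m-1}$ from $\hat{\mathbf{g}}_1,\dots,\hat{\mathbf{g}}_{m-2}$, computed from $\mathbf{u}_t$ with $t\le (m-2)K$, plus the outputs of Algorithm \ref{alg:Cg} which use no additional randomness), and that the exploration directions within block $m$ are drawn independently of $\mF_{m-1}$ and of each other. Once this is pinned down, everything reduces to the two elementary bounds above.
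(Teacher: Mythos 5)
Your proposal is correct and follows essentially the same route as the paper's own proof: expand $\Vert\hat{\mathbf{g}}_m\Vert^2$ into diagonal and cross terms, bound the diagonal terms by the deterministic estimate $\Vert\mathbf{g}_t\Vert\le nM/\delta$, factor the cross terms via conditional independence given $\mathbf{x}_{m-1}$ and bound each conditional mean by $G$ using Lemma \ref{lemma:bertsekas_grdient}, and finish with Jensen. Your extra care about the measurability of $\mathbf{x}_{m-1}$ and the explicit appeal to Lemma \ref{lemma:hazan_gradient} to identify the conditional expectation are welcome clarifications of steps the paper leaves implicit, but they do not change the argument.
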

\begin{proof}
Fix some block $m$. For convenience, we denote $\mT_m = \{(m-1)K+1,\cdots,mK\}$ (i.e., the set of all rounds included in block $m$). 
It holds that
    \begin{align}
        \mathbb{E} \left[  \Vert \hat{\mathbf{g}}_m \Vert ^2 \right] & =   \mathbb{E} \left[ \Vert \sum_{t\in\mT_m} \mathbf{g}_t \Vert ^2 \right] \nonumber \\
        = & \mathbb{E} \left[\sum_{t\in\mT_m}  \Vert \mathbf{g}_t \Vert ^2 + \sum_{(i,j)\in\mT_m^2,i\neq j}\mathbf{g}_i^{\top} \mathbf{g}_j \right] \nonumber \\
        = & \mathbb{E} \left[\sum_{t\in\mT_m}  \Vert  \mathbf{g}_t \Vert ^2 \right] + \sum_{(i,j)\in\mT_m^2,i\neq j} \mathbb{E} \left[ \mathbf{g}_i^{\top} \mathbf{g}_j \right] . \nonumber 
    \end{align}	
    Since, conditioned on the iterate $\x_{m-1}$, $\forall i \neq j$  $\mathbf{g}_i$, $\mathbf{g}_j$ are independent random vectors, we have
    {\begin{align*}
        &\mathbb{E} \left[  \Vert \hat{\mathbf{g}}_m \Vert ^2 \right] = \mathbb{E} \left[\sum_{t\in\mT_m} \Vert  \mathbf{g}_t \Vert ^2 \right] \\
        &+\sum_{(i,j)\in\mT_m^2,i\neq j}\E\left[{ \mathbb{E} [ \mathbf{g}_i^{\top}|\x_{m-1}] \mathbb{E}[\mathbf{g}_j |\x_{m-1}]}\right] .  \nonumber
    \end{align*}}
    Using Lemma \ref{lemma:bertsekas_grdient} we have that for all $t\in\mT_m$, $\Vert{\mathbb{E} [\mathbf{g}_t|\x_{m-1}]}\Vert = \Vert{\nabla {\hat{f}}_{t,\delta} (\mathbf{x}_{m-1})}\Vert \leq G$.
    Since $\max_{\mathbf{x} \in \mathcal{K}}  \Vert f(\mathbf{x}) \Vert  \leq M$, we also have $  \Vert \mathbf{g}_t \Vert  \leq \frac{n}{\delta}  \Vert f_t(\mathbf{y}_t) \Vert   \Vert \mathbf{u}_t \Vert  \leq \frac{nM}{\delta}$, and thus,
    {\small\begin{align*}
        &\mathbb{E} \left[\sum_{t\in\mT_m} \Vert  \mathbf{g}_t \Vert ^2 \right] +\sum_{(i,j)\in\mT_m^2,i\neq j}\E\left[{ \mathbb{E} [ \mathbf{g}_i^{\top}|\x_{m-1}] \mathbb{E}[\mathbf{g}_j |\x_{m-1}]}\right] \\ 
        &\leq  K \left(\frac{n M}{\delta}\right)^2 + \left(K^2 - K \right) G^2 \leq K \left(\frac{n M}{\delta}\right)^2 + K^2G^2. \nonumber
    \end{align*}}
      Finally, the inequality $\mathbb{E} \left[  \Vert \hat{\mathbf{g}}_m \Vert  \right]^2 \leq \mathbb{E} \left[  \Vert \hat{\mathbf{g}}_m \Vert ^2 \right] $ stated in the lemma follows from using Jensen's inequality.
\end{proof}


The following lemma combines the RFTL regret bound with the unbiased gradient estimates of the smoothed loss functions, and upper-bounds the expected regret of Algorithm \ref{alg:VRBCO}.

\begin{lemma}\label{lemma:RFTL}
Suppose that throughout the run of Algorithm \ref{alg:VRBCO}, for all blocks $m = 1, \dots, \frac{T}{K}$ it holds that $\hat{F}_{m}(\mathbf{x}_{m}) -  \hat{F}_{m}(\mathbf{x}_{m}^*) \leq \epsilon$. Then,  
the expected regret of the algorithm is upper-bounded by
    \begin{align}
        \mathbb{E}[\mathcal{R}_{T}] \leq & \left( 3 \delta G + \delta RG/r + G  \sqrt{\epsilon} +   \frac{\eta G \sqrt{K} n M}{\delta} \right.  \nonumber \\
        & \left. ~ +  \eta \left( \frac{n M}{\delta} \right)^2  + 2 \eta K G^2 \right)T + \frac{4R^2}{\eta} . \nonumber
    \end{align}
\end{lemma}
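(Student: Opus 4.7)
The plan is to carry out a block-wise RFTL regret analysis on the linearized surrogate losses $\ell_m(\x) = \eta \x^\top \hat{\g}_m$ with regularizer $\R(\x) = \|\x - \x_1\|^2$, preceded by the standard BCO-to-OCO reduction based on the smoothing construction from Section \ref{sec:smooth}. First I handle the reduction overhead: Lipschitzness of $f_t$ gives $f_t(\y_t) \leq f_t(\x_{m-1}) + \delta G$ because $\|\y_t - \x_{m-1}\| = \delta$, and defining $\x^*_\delta := (1-\delta/r)\x^* \in \mK_\delta$, another Lipschitz step gives $f_t(\x^*) \geq f_t(\x^*_\delta) - \delta R G/r$; Lemma \ref{lemma:hazan_smooth} then replaces $f_t$ by $\hat{f}_{t,\delta}$ at both $\x_{m-1}$ and $\x^*_\delta$ at cost $\delta G$ apiece. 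Together these contribute the $(3\delta G + \delta R G/r)T$ penalty and leave me to bound $\sum_t \E[\hat{f}_{t,\delta}(\x_{m-1}) - \hat{f}_{t,\delta}(\x^*_\delta)]$. Convexity of $\hat{f}_{t,\delta}$ linearizes each summand; summing within block $m$ and using Lemma \ref{lemma:hazan_gradient} with the tower property (so $\E[\hat{\g}_m \mid \mF_{m-1}] = \sum_{t \in \mT_m} \nabla \hat{f}_{t,\delta}(\x_{m-1})$, where $\mF_{m-1}$ denotes the filtration through block $m-1$) reduces everything to bounding $\sum_m \E[\hat{\g}_m^\top (\x_{m-1} - \x^*_\delta)]$.

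The key step is then to introduce the RFTL comparator $\x^*_m := \argmin_{\x \in \mK_\delta} \hat{F}_m(\x)$ and decompose
\[
\x_{m-1} - \x^*_\delta = (\x_{m-1} - \x^*_{m-1}) + (\x^*_{m-1} - \x^*_m) + (\x^*_m - \x^*_\delta),
\]
bounding each piece separately. For the third piece, Lemma \ref{lemma:shalev_rftl} applied to $\ell_m$ with $\R$ gives $\sum_m \eta \hat{\g}_m^\top (\x^*_m - \x^*_\delta) \leq \|\x^*_\delta - \x_1\|^2 + \sum_m \eta \hat{\g}_m^\top(\x^*_m - \x^*_{m+1}) \leq 4R^2 + \sum_m \eta \hat{\g}_m^\top(\x^*_m - \x^*_{m+1})$. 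A standard stability bound, exploiting that $\hat{F}_m$ and $\hat{F}_{m+1}$ are both $2$-strongly convex and differ by $\eta \x^\top \hat{\g}_m$, yields $\|\x^*_m - \x^*_{m+1}\| \leq \eta\|\hat{\g}_m\|/2$, hence $\hat{\g}_m^\top(\x^*_m - \x^*_{m+1}) \leq \eta\|\hat{\g}_m\|^2/2$; taking expectations and applying Lemma \ref{lemma:expectation_gradient} supplies the $4R^2/\eta$ term together with $\eta (nM/\delta)^2 T$ and $\eta K G^2 T$ contributions. For the first piece, $2$-strong convexity of $\hat{F}_{m-1}$ and the CG stopping condition $\hat{F}_{m-1}(\x_{m-1}) - \hat{F}_{m-1}(\x^*_{m-1}) \leq \epsilon$ give $\|\x_{m-1} - \x^*_{m-1}\| \leq \sqrt{\epsilon}$; conditioning on $\mF_{m-1}$ and using $\|\E[\hat{\g}_m \mid \mF_{m-1}]\| \leq KG$ (from Lemma \ref{lemma:bertsekas_grdient} applied blockwise) combined with Cauchy--Schwarz produces the $G\sqrt{\epsilon}\, T$ term. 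For the middle piece, the same stability estimate one step earlier gives $\|\x^*_{m-1} - \x^*_m\| \leq \eta\|\hat{\g}_{m-1}\|/2$; conditioning again on $\mF_{m-1}$ and applying Jensen with Lemma \ref{lemma:expectation_gradient} to control $\E[\|\hat{\g}_{m-1}\|]$ yields the $\eta G \sqrt{K}\, nM/\delta \cdot T$ term together with another $\eta K G^2 T$ contribution that combines with the previous one to give the $2\eta K G^2$ coefficient.

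The main obstacle is the interplay between the one-block prediction delay (the algorithm plays $\x_{m-1}$ during block $m$ rather than the ``fresh'' RFTL iterate $\x^*_m$) and the aggregated gradient $\hat{\g}_m$. Neither a direct RFTL reduction nor a direct stability bound would work alone, which is precisely why the triple decomposition through $\x^*_{m-1}$ and $\x^*_m$ is required. The conditional-expectation manipulations must be carried out at the right resolution: all three of $\x_{m-1}, \x^*_{m-1}, \x^*_m$ are $\mF_{m-1}$-measurable while $\hat{\g}_m$ is conditionally unbiased given $\mF_{m-1}$, so each piece can be controlled by first conditioning on $\mF_{m-1}$ and then invoking either $\|\E[\hat{\g}_m \mid \mF_{m-1}]\| \leq KG$ (for the two delay/approximation pieces) or Lemma \ref{lemma:expectation_gradient} (for the second-moment term in the RFTL shift). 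Adding the three piece bounds to the $(3\delta G + \delta R G/r)T$ penalty from the BCO reduction and the $4R^2/\eta$ from the RFTL regularizer then produces the stated inequality.
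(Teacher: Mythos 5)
Your proposal is correct and follows essentially the same route as the paper's proof: the same BCO-to-OCO reduction yielding the $(3\delta G + \delta R G/r)T$ overhead, the same triple decomposition through $\x^*_{m(t)-1}$ and $\x^*_{m(t)}$, the RFTL lemma for the comparator term, the strong-convexity stability bound for the consecutive-minimizer shifts, and Lemma \ref{lemma:expectation_gradient} to control the first and second moments of the block gradient $\hat{\mathbf{g}}_m$. The only cosmetic differences are your slightly tighter factor of $1/2$ in the stability bound and your blockwise conditioning via $\Vert \mathbb{E}[\hat{\mathbf{g}}_m \mid \mathcal{F}_{m-1}]\Vert \leq KG$ (the paper instead works per round with $\Vert \nabla \hat{f}_{t,\delta}(\x_{m(t)-1})\Vert \leq G$), neither of which changes the argument or the final bound.
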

Define $\mathbf{x}^* \in \argmin\limits_{\mathbf{x} \in  \mathcal{K}} \sum_{t=1}^{T} f_t(\mathbf{x})$, $\tilde{\mathbf{x}}^* =  (1- \delta/r)\mathbf{x}^*$, $\forall t : m(t) := \ceil*{\frac{t}{K}}$. Recall that throughout any block $m$,  Algorithm \ref{alg:VRBCO} predicts according to  $\mathbf{x}_{m-1}$.
\begin{proof}
    It holds that 
    {\small\begin{align}
        \mathbb{E}[&\mathcal{R}_{T}]  =  \sum_{t=1}^{T} \mathbb{E} [f_t(\mathbf{y}_{t})] - \sum_{t=1}^{T} f_t(\mathbf{x}^*) \nonumber \\
        = & \sum_{t=1}^{T} \mathbb{E} [f_t(\mathbf{y}_{t})] - \sum_{t=1}^{T} \mathbb{E} [f_t(\mathbf{x}_{m(t)-1})] + \sum_{t=1}^{T} \mathbb{E} [f_t(\mathbf{x}_{m(t)-1})] \nonumber \\
        & - \sum_{t=1}^{T} f_t(\tilde{\mathbf{x}}^*) + \sum_{t=1}^{T} f_t(\tilde{\mathbf{x}}^*) - \sum_{t=1}^{T} f_t(\mathbf{x}^*). \label{eq:full_regret_rftl_lemma}
    \end{align}}
    $f_t$ is $G$-Lipschitz, and thus we have that
    {\small\begin{align}
        \sum_{t=1}^{T} \mathbb{E} [f_t(\mathbf{y}_{t})]  & - \sum_{t=1}^{T} \mathbb{E} [f_t(\mathbf{x}_{m(t)-1})] \nonumber \\
        & = \sum_{t=1}^{T} \mathbb{E} [f_t(\mathbf{x}_{m(t)-1} + \delta \mathbf{u}_{t}) - f_t(\mathbf{x}_{m(t)-1})] \nonumber \\
        & \leq \sum_{t=1}^{T} \mathbb{E} [G ~  \Vert \delta \mathbf{u}_{t} \Vert ] \leq \delta G T. \label{eq:bandit_feedback_regret_rftl_lemma}
    \end{align}}
    Also,
    \begin{align}
        \sum_{t=1}^{T} f_t(\tilde{\mathbf{x}}^*) & - \sum_{t=1}^{T} f_t(\mathbf{x}^*) \leq \sum_{t=1}^{T} G  \Vert  \tilde{\mathbf{x}}^* - \mathbf{x}^*  \Vert  \nonumber \\
        & = \sum_{t=1}^{T} G \Vert (1-\delta/r) \mathbf{x}^* - \mathbf{x}^* \Vert \leq \delta R G T/r. \label{eq:squeeze_regret_rftl_lemma}
    \end{align}    
    Now, we need to obtain an upper bound on $\sum_{t=1}^{T} \mathbb{E} [f_t(\mathbf{x}_{m(t)-1})]  -  \sum_{t=1}^{T} f_t(\tilde{\mathbf{x}}^*)$. We will first take a few preliminary steps. Define for all $m \in \left[\frac{T}{K}\right]$ $\mathcal{F}_m = \{ \mathbf{x}_1, \hat{\mathbf{g}}_1, \dots, \mathbf{x}_{m-1}, \hat{\mathbf{g}}_{m-1} \} $- the history of all predictions and gradient estimates. Throughout the sequel we introduce the short notation $ \hat{\nabla}_{t,\delta,m(t)-1} = \nabla {\hat{f}}_{t,\delta}(\mathbf{x}_{m(t)-1})$.
    Since $\mathbf{g}_t$ is an unbiased estimator of $\nabla {\hat{f}}_{t,\delta}(\mathbf{x}_{m(t)-1}) = \hat{\nabla}_{t,\delta,m(t)-1}$, then $\mathbb{E} \left[ \mathbf{g}_t | \mathcal{F}_m \right] = \hat{\nabla}_{t,\delta,m(t)-1}$. Since  $\mathbf{x}_m^* = \argmin\limits_{\mathbf{x} \in (1-\delta/r) \mathcal{K}} \Big{\{} \hat{F}_{m}(\mathbf{x}) := \eta \sum_{i=1}^{m-1} \mathbf{x}^{\top} \hat{\mathbf{g}}_i +   \Vert \mathbf{x} - \mathbf{x}_1 \Vert ^2 \Big{\}} \nonumber$, we have that $\mathbb{E} \left[ \mathbf{x}_m^*| \mathcal{F}_m \right] = \mathbf{x}_m^*$. From both observations $\forall \mathbf{x} \in (1-\delta/r) \mathcal{K}$ and $\forall m \in \left[\frac{T}{K}\right] $, it holds that
    {\small\begin{align}
        \mathbb{E}  \left[ \hat{\mathbf{g}}_m^{\top}  (\mathbf{x}_{m}^* - \mathbf{x}) \right] & = \mathbb{E} \left[ \mathbb{E} \left[ \hat{\mathbf{g}}_m| \mathcal{F}_{m} \right] ^{\top}  (\mathbf{x}_{m}^* - \mathbf{x}) \right] \nonumber \\
        = & \mathbb{E} \left[ \sum_{t=(m-1)K+1}^{mK} \mathbb{E} \left[  \mathbf{g}_t| \mathcal{F}_{m(t)} \right] ^{\top}  (\mathbf{x}_{m(t)}^* - \mathbf{x}) \right] \nonumber \\
        = & \sum_{t=(m-1)K+1}^{mK}  \mathbb{E} \left[ \hat{\nabla}_{t,\delta,m(t)-1}^{\top} (\mathbf{x}_{m(t)}^* - \mathbf{x}) \right]. \label{eq:expectation_derivative_rftl_lemma}
    \end{align}}
    Using Lemma \ref{lemma:hazan_smooth} with the regularizer $\R(X) = \frac{ \Vert \mathbf{x} - \mathbf{x}_1  \Vert ^2}{\eta}$ and $\{\hat{\mathbf{g}}_m^{\top}\mathbf{x}\}_{m=1}^{T/k}$ as the (linear) loss functions, we have that $\forall \mathbf{x} \in (1-\delta/r) \mathcal{K}$,
    {\small\begin{align}
        \sum_{m=1}^{\frac{T}{K}} \hat{\mathbf{g}}_m^{\top} (\mathbf{x}_{m}^* - \mathbf{x}) \leq \sum_{m=1}^{\frac{T}{K}} \hat{\mathbf{g}}_m^{\top}  (\mathbf{x}_{m}^* - \mathbf{x}_{m+1}^*) +  \frac{1}{\eta}  \Vert \mathbf{x} - \mathbf{x}_1 \Vert ^2. \nonumber
    \end{align}}
    Since for all $m$, $\hat{F}_m(\mathbf{x})$ is $2$-strongly convex and $\hat{F}_{m}(\mathbf{x}_{m}^*) \leq \hat{F}_{m}(\mathbf{x}_{m+1}^*)$, using Eq. \eqref{eq:strong_convexity} we have that
    \begin{align}
       \Vert \mathbf{x}_{m}^*-&\mathbf{x}_{m+1}^* \Vert ^2  \leq \hat{F}_{m+1}(\mathbf{x}_{m}^*) - \hat{F}_{m+1}(\mathbf{x}_{m+1}^*) \nonumber \\
      & =  \hat{F}_{m}(\mathbf{x}_{m}^*) - \hat{F}_{m}(\mathbf{x}_{m+1}^*) + \eta \hat{\mathbf{g}}_{m}^{\top} (\mathbf{x}_{m}^* - \mathbf{x}_{m+1}^*)  \nonumber \\ 
      & \leq  \eta  \Vert \hat{\mathbf{g}}_{m} \Vert  \Vert (\mathbf{x}_{m}^* - \mathbf{x}_{m+1}^*) \Vert . \label{eq:absolut_dist}
    \end{align}
    From the above inequality we obtain $ \Vert \mathbf{x}_{m}^*-\mathbf{x}_{m+1}^* \Vert  \leq \eta  \Vert \hat{\mathbf{g}}_{m} \Vert $.
    From these three observations and Cauchy-Schwarz inequality, we have
    {\small\begin{align}
        \sum_{t=1}^{T} \mathbb{E} & \left[ \hat{\nabla}_{t,\delta,m(t)-1}^{\top}   (\mathbf{x}_{m(t)}^* - \tilde{\mathbf{x}}^*) \right] = \E\left[ \sum_{m=1}^{\frac{T}{K}} \hat{\mathbf{g}}_m^{\top}  (\mathbf{x}_{m}^* - \tilde{\x}^*) \right] \nonumber\\
        \leq & ~ \mathbb{E} \left[ \sum_{m=1}^{\frac{T}{K}} \hat{\mathbf{g}}_m^{\top}  (\mathbf{x}_{m}^* - \mathbf{x}_{m+1}^*) \right] + ~ \frac{1}{\eta}  \Vert \tilde{\mathbf{x}}^* - \mathbf{x}_1 \Vert ^2 \nonumber\\
        \leq & ~ \eta  \sum_{m=1}^{\frac{T}{K}} \mathbb{E} \left[ 
         \Vert \hat{\mathbf{g}}_m \Vert ^2 \right] + \frac{4R^2}{\eta} \nonumber\\
        \underset{(a)}{\leq} & ~  \eta T \left(\frac{n M}{\delta}\right)^2 + \eta K  T G^2 + \frac{4R^2}{\eta}. \label{eq:rftl_regret_rftl_lemma}
    \end{align}}
    Inequality (a) is due to Lemma \ref{lemma:expectation_gradient}. Using Lemma \ref{lemma:bertsekas_grdient} we have that for all $t\in[T]$, $\Vert{\nabla {\hat{f}}_{t,\delta} (\mathbf{x}_t)}\Vert \leq G$. Also, using Eq. \eqref{eq:strong_convexity} w.r.t. $\hat{F}_{m-1}$ and our assumption, $\hat{F}_{m-1}(\mathbf{x}_{m-1}) -  \hat{F}_{m-1}(\mathbf{x}_{m-1}^*) \leq \epsilon$, we have that 
    \begin{align}
         \sum_{t=1}^{T} \mathbb{E} &\left[ \hat{\nabla}_{t,\delta,m(t)-1}^{\top} (\mathbf{x}_{m(t)-1} - \mathbf{x}_{m(t)-1}^*) \right] \nonumber \\
         & \leq  G \sum_{t=1}^{T} \mathbb{E} \left[  \Vert (\mathbf{x}_{m(t)-1} - \mathbf{x}_{m(t)-1}^*) \Vert  \right] \nonumber \\
         & \leq G K \sum_{m=1}^{\frac{T}{K}}  \mathbb{E} \left[ \sqrt{\hat{F}_{m-1}(\mathbf{x}_{m-1}) - \hat{F}_{m-1}(\mathbf{x}_{m-1}^*)} \right]  \nonumber \\
         & \leq G T \sqrt{\epsilon}. \label{eq:cg_regret_rftl_lemma}
    \end{align}
    Using Eq. \eqref{eq:absolut_dist}, we have
    \begin{align}
         &\sum_{t=1}^{T} \mathbb{E} \left[ \hat{\nabla}_{t,\delta,m(t)-1}^{\top} (\mathbf{x}_{m(t)-1}^* - \mathbf{x}_{m(t)}^*) \right] \nonumber \\
         & \leq  G \sum_{t=1}^{T} \mathbb{E} \left[  \Vert (\mathbf{x}_{m(t)-1}^* - \mathbf{x}_{m(t)}^*) \Vert  \right]  \nonumber \\
         & \leq G K \eta \sum_{m=1}^{\frac{T}{K}}  \mathbb{E} \left[   \Vert \hat{\mathbf{g}}_{m-1} \Vert  \right]  \nonumber \\
         &\underset{(a)}{\leq}  G T \eta \left( \frac{\sqrt{K} n M}{\delta} + KG \right)
         . \label{eq:osb_regret_rftl_lemma}
    \end{align}
    Inequality (a) is due to Lemma \ref{lemma:expectation_gradient} and the fact that for all $a,b \in \reals^+$ it holds that $\sqrt{a+b} \leq \sqrt{a} + \sqrt{b}$. One last step before we will achieve the target bound, we require an upper bound on the regret w.r.t. the $\delta$-smoothed losses. Combining the results of Eq. \eqref{eq:rftl_regret_rftl_lemma},  \eqref{eq:cg_regret_rftl_lemma} and  \eqref{eq:osb_regret_rftl_lemma}, using the convexity of $f_t$, $\hat{f}_{t,\delta}(\mathbf{x}) -  \hat{f}_{t,\delta}(\mathbf{y}) \leq \nabla \hat{f}_{t,\delta}(\mathbf{x}) (\mathbf{x} - \mathbf{y})$, we obtain
    {
    \begin{align}
        & \sum_{t=1}^{T}  \left( \mathbb{E} \left[ \hat{f}_{t,\delta}(\mathbf{x}_{m(t)-1}) \right]  -  \hat{f}_{t,\delta}(\tilde{\mathbf{x}}^*) \right) \nonumber \\
        & \leq  \sum_{t=1}^{T} \mathbb{E} \left[ \hat{\nabla}_{t,\delta,m(t)-1}^{\top} (\mathbf{x}_{m(t)-1} - \tilde{\mathbf{x}}^*) \right] \nonumber \\
        & =  \sum_{t=1}^{T} \mathbb{E} \Big[ \hat{\nabla}_{t,\delta,m(t)-1}^{\top} (\mathbf{x}_{m(t)-1} - \mathbf{x}_{m(t)-1}^*)  \nonumber \\
        &~+ \hat{\nabla}_{t,\delta,m(t)-1}^{\top} (\mathbf{x}_{m(t)-1}^* - \mathbf{x}_{m(t)}^*) \nonumber\\
        &~+ \hat{\nabla}_{t,\delta,m(t)-1}^{\top} (\mathbf{x}_{m(t)}^* - \tilde{\mathbf{x}}^*) \Big] \nonumber \\
        & \leq  \left( G  \sqrt{\epsilon} +   \frac{\eta n M}{\delta} \left( G \sqrt{K} + \frac{n M}{\delta} \right) + 2 \eta K G^2 \right)T \nonumber \\
        &~+ \frac{4R^2}{\eta}. \label{eq:regret_delta_smooth_rftl_lemma}
    \end{align}}
    Using Lemma \ref{lemma:hazan_smooth} and the above equation, we have
    {\begin{align}
        & \sum_{t=1}^{T} \mathbb{E}  \left[ f_t(\mathbf{x}_{m(t)-1}) \right]  -  \sum_{t=1}^{T} f_t(\tilde{\mathbf{x}}^*) \nonumber \\
        & = \sum_{t=1}^{T} \mathbb{E} \left[ f_t(\mathbf{x}_{m(t)-1}) -  \hat{f}_{t,\delta}(\mathbf{x}_{m(t)-1}) \right] + \left( \hat{f}_{t,\delta} (\tilde{\mathbf{x}}^*) -  f_t(\tilde{\mathbf{x}}^*) \right) \nonumber \\
        & ~~ + \sum_{t=1}^{T} \left( \mathbb{E} \left[ \hat{f}_{t,\delta}(\mathbf{x}_{m(t)-1}) \right] -  \hat{f}_{t,\delta}(\tilde{\mathbf{x}}^*) \right) \leq \frac{4R^2}{\eta} \nonumber \\
        & + \left( 2 \delta G + G  \sqrt{\epsilon} +   \frac{\eta n M G \sqrt{K}}{\delta} + \frac{\eta n^2 M^2}{\delta^2} + 2 \eta K G^2 \right)T.
    \end{align}}
    Combining the last equation with Eq. \eqref{eq:bandit_feedback_regret_rftl_lemma},  \eqref{eq:squeeze_regret_rftl_lemma} and Eq. \eqref{eq:full_regret_rftl_lemma}, we obtain the required bound.
\end{proof}

The following lemma is used to upper-bound the number of iterations required by the conditional gradient method, Algorithm \ref{alg:Cg}, to terminate on each invocation.\begin{lemma}
    \label{lemma:cg_epsilon_error_L_iterations} Given a function $F(\mathbf{x})$, $2$-smooth and $2$-strongly convex, and $\mathbf{x}_{1} \in \mathcal{K}_{\delta}$ such that $F(\mathbf{x}_{1}) - F(\mathbf{x}^*) \leq \tilde{\epsilon}$, where $\mathbf{x}^* = \argmin\limits_{\mathbf{x} \in \mathcal{K}_{\delta}}F(\mathbf{x}) $, Algorithm \ref{alg:Cg} produces a point $\mathbf{x}_{L+1}\in\mK_{\delta}$ such that $F(\mathbf{x}_{L+1}) - F(\mathbf{x}^*) \leq \epsilon$ after at most $L = \max \bigg{\{} \frac{16R^2}{\epsilon^2} (h_1 - \epsilon) , 
    ~ \frac{2}{\epsilon} (h_1 - \epsilon) \bigg{\}}$ iterations.
\end{lemma}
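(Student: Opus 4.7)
Define the optimality gap $h_\tau := F(\mathbf{z}_\tau) - F(\mathbf{x}^*)$ (so $h_1 = F(\mathbf{x}_{in}) - F(\mathbf{x}^*) \le \tilde\epsilon$) and the Frank--Wolfe duality gap $g_\tau := \nabla F(\mathbf{z}_\tau)^\top(\mathbf{z}_\tau - \mathbf{v}_\tau)$. By convexity, together with the fact that $\mathbf{v}_\tau$ is a linear minimizer of $\nabla F(\mathbf{z}_\tau)$ over $\mathcal{K}_\delta$ and $\mathbf{x}^*\in\mathcal{K}_\delta$, one has $g_\tau \ge \nabla F(\mathbf{z}_\tau)^\top(\mathbf{z}_\tau - \mathbf{x}^*) \ge h_\tau$. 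In particular, the stopping test $g \le \epsilon$ immediately certifies $h \le \epsilon$ on the returned iterate, which is the claimed accuracy guarantee.

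The core of the argument is a per-step progress inequality. Using $2$-smoothness of $F$ combined with $\mathcal{K}_\delta \subseteq \mathcal{K} \subseteq R\mathcal{B}^n$ (so $\|\mathbf{v}_\tau - \mathbf{z}_\tau\|^2 \le 4R^2$), the line search in Algorithm~\ref{alg:Cg} gives, for every $\sigma\in[0,1]$,
\begin{align*}
    F(\mathbf{z}_{\tau+1}) \le F\bigl(\mathbf{z}_\tau + \sigma(\mathbf{v}_\tau-\mathbf{z}_\tau)\bigr) \le F(\mathbf{z}_\tau) - \sigma g_\tau + 4\sigma^2 R^2,
\end{align*}
whence $h_{\tau+1} \le h_\tau - \max_{\sigma\in[0,1]}(\sigma g_\tau - 4\sigma^2 R^2)$. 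I would then split on the location of the unconstrained maximizer $\sigma^\star = g_\tau/(8R^2)$: when $g_\tau \le 8R^2$ the point $\sigma^\star$ is feasible and yields progress $g_\tau^{2}/(16R^2)$; when $g_\tau > 8R^2$ the corner choice $\sigma = 1$ yields progress $g_\tau - 4R^2 \ge g_\tau/2$. Combining,
\begin{align*}
    h_\tau - h_{\tau+1} \ge \min\!\left(\frac{g_\tau^2}{16R^2},\ \frac{g_\tau}{2}\right).
\end{align*}

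To finish, observe that the do--while loop is (re-)entered only when $g_\tau > \epsilon$, so each executed iteration reduces $h$ by at least $D := \min\!\bigl(\epsilon^2/(16R^2),\ \epsilon/2\bigr)$. Telescoping over $L$ iterations gives $h_{L+1} \le h_1 - L\cdot D$, and since the loop must have already exited once the accumulated progress exceeds $h_1-\epsilon$ (any further decrease would drive $h$ below $\epsilon$ and, via $g\ge h$ together with the fact that $h$ is non-increasing, force the stopping condition to fire), we obtain
\begin{align*}
    L \;\le\; \frac{h_1 - \epsilon}{D} \;=\; \max\!\left(\tfrac{16R^2}{\epsilon^2}(h_1-\epsilon),\ \tfrac{2}{\epsilon}(h_1-\epsilon)\right),
\end{align*}
which matches the stated bound. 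The only mildly delicate point is the two-case optimization of the line-search quadratic; everything else is a textbook Frank--Wolfe telescoping argument and does not even invoke the $2$-strong convexity of $F$.
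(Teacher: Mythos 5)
Your proof is correct and follows essentially the same route as the paper's: lower-bound the Frank--Wolfe duality gap by the primal gap, use $2$-smoothness together with the line search (via the surrogate step size $\min\{g_\tau/(8R^2),1\}$) to obtain per-iteration progress $\min\{g_\tau^2/(16R^2),\,g_\tau/2\}$, and telescope using $g_\tau>\epsilon$ inside the loop. One small caveat: your parenthetical claim that $h\le\epsilon$ would ``force the stopping condition to fire'' does not follow from $g_\tau\ge h_\tau$ (the inequality points the wrong way for that), but this is not needed for the lemma as stated, since the conclusion $h_{L+1}\le\epsilon$ follows directly from the telescoping, exactly as in the paper.
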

\begin{proof}
    For any iteration $\tau$ of Algorithm \ref{alg:Cg}, define $h_{\tau} =  F(\mathbf{x}_{\tau}) - F(\mathbf{x}^*)$ and denote $\nabla_{\tau} = \nabla F(\mathbf{x}_{\tau})$. From the choice of $\v_{\tau}$ and the convexity of $F(\cdot)$, it follows that
    \begin{align}
        \nabla_{\tau}^{\top} (\mathbf{x}_{\tau} - \mathbf{v}_{\tau}) \geq & \nabla_{\tau}^{\top} (\mathbf{x}_{\tau} - \mathbf{x}^*) \nonumber \\
        \geq & F(\mathbf{x}_{\tau}) - F(\mathbf{x}^*) = h_\tau \label{eq:aux1_cg_lemma}
    \end{align}
    Now, we establish the convergence rate of Algorithm \ref{alg:Cg}. It holds that
    \begin{align}
        h_{\tau+1} = & F(\mathbf{x}_{\tau+1}) -  F(\mathbf{x}^*) \nonumber \\
        = & F(\mathbf{x}_{\tau} + \sigma_{\tau}(\mathbf{v}_{\tau} - \mathbf{x}_{\tau})) - F(\mathbf{x}^*) . \nonumber 
    \end{align}
    For our analysis we define the step-size $\hat{\sigma}_{\tau} = \min \Big{\{} \frac{ \nabla_{\tau}^{\top} (\mathbf{x}_{\tau} - \mathbf{v}_{\tau})}{8R^2}, 1 \Big{\}}$. 
    Since $\sigma_{\tau}$ is chosen via line-search, we have that
    \begin{align}
        h_{\tau+1} = &  F(\mathbf{x}_{\tau} + \sigma_{\tau}(\mathbf{v}_{\tau} - \mathbf{x}_{\tau})) - F(\mathbf{x}^*) \nonumber \\
        \leq & F(\mathbf{x}_{\tau} + \hat{\sigma}_{\tau}(\mathbf{v}_{\tau} - \mathbf{x}_{\tau})) - F(\mathbf{x}^*). \nonumber
    \end{align}
    Since $F(\mathbf{x})$ is $2$-smooth it holds that 
    \begin{align*}
    F(\mathbf{x}_{\tau} + \hat{\sigma}_{\tau}(\mathbf{v}_{\tau} - \mathbf{x}_{\tau})) &\leq F(\mathbf{x}_{\tau}) + \hat{\sigma}_{\tau} \nabla_{\tau}^{\top} (\mathbf{v}_{\tau} - \mathbf{x}_{\tau}) \\
    &+ \hat{\sigma}_{\tau}^2  \Vert \mathbf{v}_{\tau} - \mathbf{x}_{\tau} \Vert ^2, 
    \end{align*}
    and we obtain
    \begin{align}
        h_{\tau+1} \leq & h_{\tau} + \hat{\sigma}_{\tau}^2 (2R)^2 - \hat{\sigma}_{\tau} \nabla_{\tau}^{\top} (\mathbf{x}_{\tau} - \mathbf{v}_{\tau}).   \nonumber
    \end{align}
    We now consider several cases.
    \\ \emph{Case 1}: If $\nabla_{\tau}^{\top} (\mathbf{x}_{\tau} - \mathbf{v}_{\tau}) \leq \epsilon$ for some $\tau < L$, the algorithm will stop after less than $L$ iterations. Moreover, from Eq. \eqref{eq:aux1_cg_lemma} we have $h_{\tau} \leq \epsilon$.
    \\ \emph{Case 2}: Else, $\nabla_{\tau}^{\top} (\mathbf{x}_{\tau} - \mathbf{v}_{\tau}) \geq \epsilon$ for all $\tau < L$. We have 2 cases: \\
    \emph{Case 2.1}: If $\nabla_{\tau}^{\top} (\mathbf{x}_{\tau} - \mathbf{v}_{\tau}) \geq 8R^2$ then $\hat{\sigma}_{\tau} = 1$ and we have
    \begin{align}
        h_{\tau+1} \leq & h_{\tau} + \hat{\sigma}_{\tau}^2 (2R)^2 - \hat{\sigma}_{\tau} \nabla_{\tau}^{\top} (\mathbf{x}_{\tau} - \mathbf{v}_{\tau})   \nonumber \\
        \leq & h_{\tau} - \frac{ \nabla_{\tau}^{\top} (\mathbf{x}_{\tau} - \mathbf{v}_{\tau})}{2} .\nonumber
    \end{align}
        \\ \emph{Case 2.2}: Else, $ \nabla_{\tau}^{\top} (\mathbf{x}_{\tau} - \mathbf{v}_{\tau}) \leq 2D^2$, and then $\hat{\sigma}_{\tau} = \frac{ \nabla_{\tau}^{\top} (\mathbf{x}_{\tau} - \mathbf{v}_{\tau})}{8R^2}$, and we have
    \begin{align}
        h_{\tau+1} \leq & h_{\tau} + \hat{\sigma}_{\tau}^2 (2R)^2 - \hat{\sigma}_{\tau}  \nabla_{\tau}^{\top} (\mathbf{x}_{\tau} - \mathbf{v}_{\tau})   \nonumber \\
        \leq & h_{\tau} - \left( \frac{\nabla_{\tau}^{\top} (\mathbf{x}_{\tau} - \mathbf{v}_{\tau})}{4R} \right)^2. \nonumber
    \end{align}
    From both cases, we have
    \begin{align}
        h_{\tau+1} & \leq h_{\tau} - \min \bigg{\{} \left( \frac{ \nabla_{\tau}^{\top} (\mathbf{x}_{\tau} - \mathbf{v}_{\tau})}{4R} \right)^2, \frac{ \nabla_{\tau}^{\top} (\mathbf{x}_{\tau} - \mathbf{v}_{\tau})}{2} \bigg{\}} \nonumber \\
        \leq & h_{1} - \tau \min_{i = 1, \dots, \tau} \bigg{\{} \left( \frac{ \nabla_{\tau}^{\top} (\mathbf{x}_{i} - \mathbf{v}_{i})}{4R} \right)^2, \frac{ \nabla_{\tau}^{\top} (\mathbf{x}_{i} - \mathbf{v}_{i})}{2} \bigg{\}} \nonumber \\
        & \leq  h_{1} - \tau \min \bigg{\{} \left( \frac{ \epsilon}{4R} \right)^2, \frac{ \epsilon}{2} \bigg{\}}.
    \end{align}
    Thus, for all cases, after a maximum of $L$ iterations, when
    \begin{align}
        L = \max \bigg{\{} \frac{16R^2}{\epsilon^2} (h_1 - \epsilon) , 
        ~ \frac{2}{\epsilon} (h_1 - \epsilon) \bigg{\}}, \nonumber
    \end{align}
    we obtain $ h_{L+1} \leq \epsilon$.
\end{proof}

We can now finally prove our main theorem, Theorem \ref{thm:main}.

\begin{proof}[Proof of Theorem \ref{thm:main}] 
We first upper bound the expected overall number of calls to the linear optimization oracle throughout the run of the algorithm, and then we upper-bound the expected regret.

Let $\mathbf{z}_{m,\tau}$ be the iterate of Algorithm \ref{alg:Cg} after completing $\tau-1$ iterations of the do-while loop, when invoked on iteration (block) $m$ of Algorithm \ref{alg:VRBCO}. Also, for all $m,\tau$, define $h_{m,\tau} := \hat{F}_{m}(\mathbf{z}_{m,\tau}) -  \hat{F}_{m}(\mathbf{x}_{m}^*)$. Recall that for any iteration $m$ of Algorithm \ref{alg:VRBCO}, we have $\mathbf{z}_{m,1} = \mathbf{x}_{m-1}$.

Using the triangle inequality and the fact $\hat{F}_{m}(\mathbf{x}_{m+1}^*) \geq \hat{F}_{m}(\mathbf{x}_{m}^*)$, we have
\begin{align}
    \mathbb{E}[h_{m+1,1}] & =  \mathbb{E}[\hat{F}_{m+1}(\mathbf{z}_{m+1,1}) -  \hat{F}_{m+1}(\mathbf{x}_{m+1}^*)] \nonumber \\
    \leq  \mathbb{E}  [ & \hat{F}_{m}(\mathbf{x}_{m}) -  \hat{F}_{m}(\mathbf{x}_{m}^*) + \eta   \Vert \hat{\mathbf{g}}_{m} \Vert ~  \Vert  \mathbf{x}_{m} - \mathbf{x}_{m+1}^* \Vert ]. \nonumber
\end{align}
Since $h_{m,L_{m}} = \hat{F}_{m}(\mathbf{x}_{m}) -  \hat{F}_{m}(\mathbf{x}_{m}^*) \leq \epsilon$, using the triangle inequality, we have
\begin{align}
    \mathbb{E}[h_{m+1,1}&]  \leq  \epsilon +  \eta \mathbb{E}[  \Vert \hat{\mathbf{g}}_{m} \Vert \Vert \mathbf{x}_{m} - \mathbf{x}_{m}^* + \mathbf{x}_{m}^* - \mathbf{x}_{m+1}^* \Vert ] \nonumber \\
    \leq & \epsilon + \eta \mathbb{E}[    \Vert \hat{\mathbf{g}}_{m} \Vert  \left(  \Vert \mathbf{x}_{m} - \mathbf{x}_{m}^* \Vert  +  \Vert \mathbf{x}_{m}^* - \mathbf{x}_{m+1}^* \Vert  \right) ]. \nonumber
\end{align}
Since $\hat{F}_{m}(\mathbf{x})$ is $2$-strongly convex and $h_{m,L_{m}} \leq \epsilon$, using Eq. \eqref{eq:strong_convexity}, we have that $ \Vert \mathbf{x}_{m} - \mathbf{x}_{m}^* \Vert  \leq \sqrt{\epsilon}$. Also, from Eq. \eqref{eq:absolut_dist}, we have $ \Vert \mathbf{x}_{m}^* - \mathbf{x}_{m+1}^* \Vert  \leq \eta  \Vert \hat{\mathbf{g}}_{m} \Vert $. Thus, we have
\begin{align}
    \mathbb{E}[h_{m+1,1}] \leq & \epsilon  +  \eta \sqrt{\epsilon}~ \mathbb{E}[ \Vert \hat{\mathbf{g}}_{m} \Vert ]  + \eta^2 \mathbb{E}[ \Vert \hat{\mathbf{g}}_{m} \Vert ^2]. \nonumber
\end{align}
Using Lemma \ref{lemma:expectation_gradient} and the fact that for all $a,b \in \reals^+$ it holds that that $\sqrt{a+b} \leq \sqrt{a} + \sqrt{b}$, we have
\begin{align}
    \mathbb{E}[h_{m+1,1}] \leq & \epsilon  + \eta  \sqrt{\epsilon} ~ \left( \sqrt{K} \left(\frac{n M}{\delta}\right) + K G \right) \nonumber \\
    & + \eta^2 \left(K \left(\frac{n M}{\delta}\right)^2 + K^2 G^2 \right). \label{eq:upper_bound_h_1_cg_lemma}
\end{align}
Using Lemma \ref{lemma:cg_epsilon_error_L_iterations} with $\tilde{\epsilon} = \epsilon  + \eta  \sqrt{\epsilon} ~ \left( \sqrt{K} \left(\frac{n M}{\delta}\right) + K G \right) + \eta^2 \left(K \left(\frac{n M}{\delta}\right)^2 + K^2 G^2 \right)$ for $m = 1, \dots, \frac{T}{K}$, we have that on each iteration (block) $m$, the number of calls to the linear optimization oracle is $L_m \leq \max \bigg{\{} \frac{16R^2}{\epsilon^2} (h_{m,1} - \epsilon) ,~ \frac{2}{\epsilon} (h_{m,1} - \epsilon) \bigg{\}}$. Plugging-in $\epsilon$, we have $ L_{m}\leq \frac{16R^2}{\epsilon^2} (h_{m,1} - \epsilon)$.
Following Eq. \eqref{eq:upper_bound_h_1_cg_lemma} we have 
\begin{align}
    \mathbb{E}[L_{m}] \leq & \frac{16R^2}{\epsilon^2} (\mathbb{E}[h_{m,1}] - \epsilon) \nonumber \\
    \leq & \frac{16R^2}{\epsilon^2} \eta  \sqrt{\epsilon} ~ \left( \sqrt{K} \left(\frac{n M}{\delta}\right) + K G \right)  \nonumber \\
    & + \frac{16R^2}{\epsilon^2} \eta^2 \left( K \left(\frac{n M}{\delta}\right)^2 + K^2 G^2 \right) \nonumber \\
    \underset{(a)}{=} & ~ \left( \frac{3}{4} + \frac{G c}{2 n M} + \frac{G^2 c^2 }{4 n^2 M^2} \right) T^{\frac{1}{2}}.\nonumber
\end{align}
    Equality (a) is due to plugging-in $\eta, \delta, \epsilon, K$.
    Thus, overall on all blocks, we obtain 
    \begin{align}
        \mathbb{E} \left[ \sum_{m=1}^{\frac{T}{K}} L_m \right] \leq & ~\left( \frac{3}{4} + \frac{G c}{2 n M} + \frac{G^2 c^2 }{4 n^2 M^2} \right) T. \nonumber
    \end{align}
    We now turn to upper-bound the expected regret of the algorithm. Using Lemma \ref{lemma:RFTL} we have that
\begin{align}
    & \mathbb{E}[\mathcal{R}_{T}] \leq \left( 3 \delta G + \delta{}RG/r + G  \sqrt{\epsilon} +   \frac{\eta G \sqrt{K} n M}{\delta} \right. \nonumber \\ & ~~~~~~~~~~~~~~~~~ \left. + \eta \left( \frac{n M}{\delta} \right)^2  + 2 \eta K G^2 \right)T + \frac{4R^2}{\eta}  \nonumber \\
    & \underset{(a)}{=} \left( 3  c  G + \frac{c R G }{r} + 6 G R + 4 \frac{c G^2 R}{nM} + 4\frac{RnM}{c } \right) T^{\frac{3}{4}}. \nonumber
\end{align}
Equality (a) is due to plugging-in $\eta, \delta, \epsilon, K$.
\end{proof}


\bibliographystyle{plain}
\bibliography{bib}

\begin{thebibliography}{10}

\bibitem{Abernethy08}
Jacob Abernethy, Elad~E Hazan, and Alexander Rakhlin.
\newblock Competing in the dark: An efficient algorithm for bandit linear
  optimization.
\newblock In {\em 21st Annual Conference on Learning Theory, COLT 2008}, pages
  263--273, 2008.

\bibitem{Allen17}
Zeyuan Allen-Zhu, Elad Hazan, Wei Hu, and Yuanzhi Li.
\newblock Linear convergence of a frank-wolfe type algorithm over trace-norm
  balls.
\newblock In {\em Advances in Neural Information Processing Systems}, pages
  6191--6200, 2017.

\bibitem{Auer02}
Peter Auer, Nicolo Cesa-Bianchi, Yoav Freund, and Robert~E Schapire.
\newblock The nonstochastic multiarmed bandit problem.
\newblock {\em SIAM journal on computing}, 32(1):48--77, 2002.

\bibitem{Bertsekas73}
Dimitri~P Bertsekas.
\newblock Stochastic optimization problems with nondifferentiable cost
  functionals.
\newblock {\em Journal of Optimization Theory and Applications},
  12(2):218--231, 1973.

\bibitem{Bubeck12}
S{\'e}bastien Bubeck, Nicolo Cesa-Bianchi, et~al.
\newblock Regret analysis of stochastic and nonstochastic multi-armed bandit
  problems.
\newblock {\em Foundations and Trends{\textregistered} in Machine Learning},
  5(1):1--122, 2012.

\bibitem{Bubeck15bandit}
S{\'{e}}bastien Bubeck, Ofer Dekel, Tomer Koren, and Yuval Peres.
\newblock Bandit convex optimization:
  {\textbackslash}({\textbackslash}sqrt\{T\}{\textbackslash}) regret in one
  dimension.
\newblock In {\em Proceedings of The 28th Conference on Learning Theory, {COLT}
  2015, Paris, France, July 3-6, 2015}, pages 266--278, 2015.

\bibitem{Bubeck16bandit}
S{\'{e}}bastien Bubeck and Ronen Eldan.
\newblock Multi-scale exploration of convex functions and bandit convex
  optimization.
\newblock In {\em Proceedings of the 29th Conference on Learning Theory, {COLT}
  2016, New York, USA, June 23-26, 2016}, pages 583--589, 2016.

\bibitem{Bubeck17bandit}
S{\'{e}}bastien Bubeck, Yin~Tat Lee, and Ronen Eldan.
\newblock Kernel-based methods for bandit convex optimization.
\newblock In {\em Proceedings of the 49th Annual {ACM} {SIGACT} Symposium on
  Theory of Computing, {STOC} 2017, Montreal, QC, Canada, June 19-23, 2017},
  pages 72--85, 2017.

\bibitem{Cesa06}
Nicolo Cesa-Bianchi and Gabor Lugosi.
\newblock {\em Prediction, learning, and games}.
\newblock Cambridge university press, 2006.

\bibitem{Karbasi19}
Lin Chen, Mingrui Zhang, and Amin Karbasi.
\newblock Projection-free bandit convex optimization.
\newblock In {\em The 22nd International Conference on Artificial Intelligence
  and Statistics, {AISTATS} 2019, 16-18 April 2019, Naha, Okinawa, Japan},
  pages 2047--2056, 2019.

\bibitem{Dani2008price}
Varsha Dani, Sham~M Kakade, and Thomas~P Hayes.
\newblock The price of bandit information for online optimization.
\newblock In {\em Advances in Neural Information Processing Systems}, pages
  345--352, 2008.

\bibitem{Dekel15}
Ofer Dekel, Ronen Eldan, and Tomer Koren.
\newblock Bandit smooth convex optimization: Improving the bias-variance
  tradeoff.
\newblock In {\em Advances in Neural Information Processing Systems 28: Annual
  Conference on Neural Information Processing Systems 2015, December 7-12,
  2015, Montreal, Quebec, Canada}, pages 2926--2934, 2015.

\bibitem{Dudik12a}
Miroslav Dud\'{\i}k, Za\"{\i}d Harchaoui, and J{\'e}r{\^o}me Malick.
\newblock Lifted coordinate descent for learning with trace-norm
  regularization.
\newblock {\em Journal of Machine Learning Research - Proceedings Track},
  22:327--336, 2012.

\bibitem{Flaxman05}
Abraham~D Flaxman, Adam~Tauman Kalai, Adam~Tauman Kalai, and H~Brendan McMahan.
\newblock Online convex optimization in the bandit setting: gradient descent
  without a gradient.
\newblock In {\em Proceedings of the sixteenth annual ACM-SIAM symposium on
  Discrete algorithms}, pages 385--394. Society for Industrial and Applied
  Mathematics, 2005.

\bibitem{FrankWolfe}
M.~Frank and P.~Wolfe.
\newblock An algorithm for quadratic programming.
\newblock {\em Naval Research Logistics Quarterly}, 3:149--154, 1956.

\bibitem{G16b}
Dan Garber.
\newblock Faster projection-free convex optimization over the spectrahedron.
\newblock In {\em Advances in Neural Information Processing Systems 29: Annual
  Conference on Neural Information Processing Systems 2016, December 5-10,
  2016, Barcelona, Spain}, pages 874--882, 2016.

\bibitem{GH15}
Dan Garber and Elad Hazan.
\newblock Faster rates for the frank-wolfe method over strongly-convex sets.
\newblock In {\em Proceedings of the 32nd International Conference on Machine
  Learning, {ICML} 2015, Lille, France, 6-11 July 2015}, pages 541--549, 2015.

\bibitem{GH16}
Dan Garber and Elad Hazan.
\newblock A linearly convergent variant of the conditional gradient algorithm
  under strong convexity, with applications to online and stochastic
  optimization.
\newblock {\em SIAM Journal on Optimization}, 26(3):1493--1528, 2016.

\bibitem{G16c}
Dan Garber and Ofer Meshi.
\newblock Linear-memory and decomposition-invariant linearly convergent
  conditional gradient algorithm for structured polytopes.
\newblock In {\em Advances in Neural Information Processing Systems 29: Annual
  Conference on Neural Information Processing Systems 2016, December 5-10,
  2016, Barcelona, Spain}, pages 1001--1009, 2016.

\bibitem{Dudik12b}
Za{\"{\i}}d Harchaoui, Matthijs Douze, Mattis Paulin, Miroslav Dud{\'{\i}}k,
  and J{\'{e}}r{\^{o}}me Malick.
\newblock Large-scale image classification with trace-norm regularization.
\newblock In {\em {IEEE} Conference on Computer Vision and Pattern Recognition,
  {CVPR}}, 2012.

\bibitem{HazanBook}
Elad Hazan.
\newblock Introduction to online convex optimization.
\newblock {\em Foundations and Trends in Optimization}, 2(3-4):157--325, 2016.

\bibitem{Hazan16volumetric}
Elad Hazan and Zohar Karnin.
\newblock Volumetric spanners: an efficient exploration basis for learning.
\newblock {\em The Journal of Machine Learning Research}, 17(1):4062--4095,
  2016.

\bibitem{Hazan14}
Elad Hazan and Kfir Levy.
\newblock Bandit convex optimization: Towards tight bounds.
\newblock In {\em Advances in Neural Information Processing Systems}, pages
  784--792, 2014.

\bibitem{Hazan16bandit}
Elad Hazan and Yuanzhi Li.
\newblock An optimal algorithm for bandit convex optimization.
\newblock {\em arXiv preprint arXiv:1603.04350}, 2016.

\bibitem{Hazan12}
Elad~E Hazan and Satyen Kale.
\newblock Projection-free online learning.
\newblock In {\em 29th International Conference on Machine Learning, ICML
  2012}, pages 521--528, 2012.

\bibitem{Jaggi13}
Martin Jaggi.
\newblock Revisiting frank-wolfe: Projection-free sparse convex optimization.
\newblock In {\em ICML (1)}, pages 427--435, 2013.

\bibitem{Jaggi10}
Martin Jaggi and Marek Sulovsk{\'{y}}.
\newblock A simple algorithm for nuclear norm regularized problems.
\newblock In {\em Proceedings of the 27th International Conference on Machine
  Learning, {ICML}}, 2010.

\bibitem{lacoste2015linear_fw}
Simon Lacoste-Julien and Martin Jaggi.
\newblock On the global linear convergence of {F}rank-{W}olfe optimization
  variants.
\newblock In {\em Advances in Neural Information Processing Systems}, pages
  496--504, 2015.

\bibitem{Jaggi13a}
Simon Lacoste{-}Julien, Martin Jaggi, Mark~W. Schmidt, and Patrick Pletscher.
\newblock Block-coordinate frank-wolfe optimization for structural svms.
\newblock In {\em Proceedings of the 30th International Conference on Machine
  Learning, {ICML}}, 2013.

\bibitem{Laue12}
S{\"{o}}ren Laue.
\newblock A hybrid algorithm for convex semidefinite optimization.
\newblock In {\em Proceedings of the 29th International Conference on Machine
  Learning, {ICML}}, 2012.

\bibitem{Polyak}
Evgeny~S Levitin and Boris~T Polyak.
\newblock Constrained minimization methods.
\newblock {\em USSR Computational mathematics and mathematical physics},
  6:1--50, 1966.

\bibitem{Levy19}
Kfir Levy and Andreas Krause.
\newblock Projection free online learning over smooth sets.
\newblock In {\em The 22nd International Conference on Artificial Intelligence
  and Statistics}, pages 1458--1466, 2019.

\bibitem{Saha11}
Ankan Saha and Ambuj Tewari.
\newblock Improved regret guarantees for online smooth convex optimization with
  bandit feedback.
\newblock In {\em Proceedings of the Fourteenth International Conference on
  Artificial Intelligence and Statistics}, pages 636--642, 2011.

\bibitem{Shalev12}
Shai Shalev-Shwartz et~al.
\newblock Online learning and online convex optimization.
\newblock {\em Foundations and Trends{\textregistered} in Machine Learning},
  4(2):107--194, 2012.

\bibitem{ShalevShwartz11}
Shai Shalev{-}Shwartz, Alon Gonen, and Ohad Shamir.
\newblock Large-scale convex minimization with a low-rank constraint.
\newblock In {\em Proceedings of the 28th International Conference on Machine
  Learning, {ICML}}, 2011.

\bibitem{Mohri16}
Scott Yang and Mehryar Mohri.
\newblock Optimistic bandit convex optimization.
\newblock In D.~D. Lee, M.~Sugiyama, U.~V. Luxburg, I.~Guyon, and R.~Garnett,
  editors, {\em Advances in Neural Information Processing Systems 29}, pages
  2297--2305. 2016.

\bibitem{Zinkevich03}
Martin Zinkevich.
\newblock Online convex programming and generalized infinitesimal gradient
  ascent.
\newblock In {\em Proceedings of the 20th International Conference on Machine
  Learning (ICML-03)}, pages 928--936, 2003.

\end{thebibliography}

\end{document}